\newcommand{\cifar}{{CIFAR-10}\xspace}
\newcommand{\swb}{{SWB300}\xspace}
\newcommand{\sync}{{SYNC}\xspace}
\newcommand{\adpsgd}{{ADPSGD}\xspace}
\newcommand{\adpdpsgd}{{A(DP)$^2$SGD}\xspace}
\newcommand{\eff}{{EfficientNet-B0}\xspace}
\newcommand{\vgg}{{VGG-19}\xspace}
\newcommand{\resnet}{{ResNet-18}\xspace}
\newcommand{\googlenet}{{GoogleNet}\xspace}
\newcommand{\mobilenet}{{MobileNet}\xspace}
\newcommand{\mobilenetvtwo}{{MobileNetV2}\xspace}
\newcommand{\shuf}{{ShuffleNet}\xspace}
\newcommand{\resnext}{{ResNext-29}\xspace}
\newcommand{\senet}{{SENet-18}\xspace}
\title{\adpdpsgd: Asynchronous Decentralized Parallel Stochastic Gradient Descent with Differential Privacy}
\author{%
  Jie Xu \\
  Weill Cornell Medicine\\
  New York, USA \\
  \texttt{jix4002@med.cornell.edu} \\
   \And
   Wei Zhang \\
   IBM Research \\
   New York, USA \\
   \texttt{weiz@us.ibm.com} \\
   \AND
   Fei Wang \thanks{Corresponding author} \\
   Weill Cornell Medicine \\
   New York, USA \\
   \texttt{few2001@med.cornell.edu} \\
}
\newtheorem{prop}{Proposition}
\newtheorem{assum}{Assumption}
\newtheorem{deff}{Definition}
\newtheorem{remark}{Remark}
\newtheorem{lemma}{Lemma}
\newtheorem{theorem}{Theorem}
\begin{document}

\maketitle

\begin{abstract}
As deep learning models are usually massive and complex, distributed learning is essential for increasing training efficiency. Moreover, in many real-world application scenarios like healthcare, distributed learning can also keep the data local and protect privacy. 
A popular distributed learning strategy is federated learning, where there is a central server storing the global model and a set of local computing nodes updating the model parameters with their corresponding data. The updated model parameters will be processed and transmitted to the central server, which leads to heavy communication costs. Recently, asynchronous decentralized distributed learning has been proposed and demonstrated to be a more efficient and practical strategy where there is no central server, so that each computing node only communicates with its neighbors. Although no raw data will be transmitted across different local nodes, there is still a risk of information leak during the communication process for malicious participants to make attacks. 
In this paper, we present a differentially private version of asynchronous decentralized parallel SGD (\adpsgd) framework, or \adpdpsgd for short, which maintains communication efficiency of \adpsgd and prevents the inference from malicious participants.
Specifically, R{\'e}nyi differential privacy is used to provide tighter privacy analysis for our composite Gaussian mechanisms while the convergence rate is consistent with the non-private version.
Theoretical analysis shows \adpdpsgd also converges at the optimal $\mathcal{O}(1/\sqrt{T})$ rate as SGD. 
Empirically, \adpdpsgd achieves comparable model accuracy as the differentially private version of Synchronous SGD (SSGD) but runs much faster than SSGD in heterogeneous computing environments.
\end{abstract}

\section{Introduction}
Distributed Deep Learning (DDL), as a collaborative modeling mechanism that could save storage cost and increase computing efficiency when carrying out machine learning tasks,
has demonstrated strong potentials in various areas, especially for training large deep learning models on large dataset such as ImageNet~\cite{akiba2017chainermn,zhang2015staleness,you2017imagenet}. 
Typically, assume there are $K$ workers where the data reside (a worker could be a machine or a GPU, \emph{etc}), distributed machine learning problem boils down to solving an empirical risk minimization problem of the form:
\begin{equation}\label{eq:dml}
\min_{\mathbf{w}\in\mathbb{R}^d} F(\mathbf{w}):=\mathbb{E}_{k \sim {\mathcal{I}}}[F_k(\mathbf{w})]=\sum_{k=1}^K p_kF_k(\mathbf{w}),
\end{equation}
where $p_k$'s define a distribution, that is, $p_k\geq 0$ and $\sum_k p_k=1$, and $p_k$ indicates the percentage of the updates performed by worker $k$. The objective $F(\mathbf{w})$ in problem~(\ref{eq:dml}) can be rephrased as a linear combination of the local empirical objectives $F_k(\mathbf{w}):=\mathbb{E}_{\xi \sim {\mathcal{D}_k}}[f(\mathbf{w};\xi)]$, where $\mathcal{D}_k$ denotes the data distribution associated to worker $k\in[K]$ and $\xi$ is a data point sampled via $\mathcal{D}_k$.

In particular, algorithms for DDL face with following issues. On one hand, the communication cost to the central server may not be affordable since a large number of updates of a number of workers are usually involved. Many practical peer-to-peer networks are usually dynamic, and it is not possible to regularly access a fixed central server. Moreover, because of the dependency on the central server, all workers are required to agree on one trusted central body, and whose failure would interrupt the entire training process for all workers. Therefore, researchers have started to study fully decentralized framework where the central server is not required~\cite{shayan2018biscotti,roy2019braintorrent,Lalitha2019decentralized,lalitha2019peer}, which is also the focus of this paper. In addition, to improve flexibility and scalability, as in~\cite{lian2018asynchronous}, we consider the asynchronous communication where the participant workers do not operate in the lock-step.

On the other hand, since a large number of workers usually participate in the training process in distributed learning, it is difficult to ensure none of them are malicious. Despite no raw data sharing and no central body are required to coordinate the training process of the global model, the open computing network architecture and extensive collaborations among works still inevitably provide the opportunities for malicious worker to infer the private information about another worker given the execution of $f(\mathbf{w})$, or over the shared predictive model $\mathbf{w}$~\cite{truex2018hybrid}. To alleviate this issue, differential privacy (DP), as an alternative theoretical model to provide mathematical privacy guarantees, has caught people's attention~\cite{dwork2006our}.
DP ensures that the addition or removal of a single data sample does not substantially affect the outcome of any analysis, thus is widely applied to many algorithms to prevent implicit leakage, not only for traditional algorithms, \emph{e.g.} 
principal component analysis~\cite{chaudhuri2013near}, support vector machine~\cite{rubinstein2009learning}, but also for modern deep learning research~\cite{abadi2016deep,mcmahan2017learning}. 

In this paper, we focus on achieving differential privacy in asynchronous decentralized communication setting, where we target to obtain a good convergence rate while keeping the communication cost low. We highlight the following aspects of our contributions:
\begin{itemize}
	\item We propose a differentially private version of \adpsgd, \emph{i.e.}, \adpdpsgd, 
	where differential privacy is introduced to protect the frequently exchanged variables.
	\item We present the privacy and utility guarantees for \adpdpsgd, where R{\'e}nyi differential privacy is introduced to provide tighter privacy analysis of composite heterogeneous mechanisms~\cite{mironov2017renyi} while the convergence rate is consistent with the non-private version.
	\item Empirically, we conduct experiments on both computer vision (CIFAR-10) and speech recognition (\swb) datasets. \adpdpsgd achieves comparable model accuracy and level of DP protection as differentially private version of Synchronous SGD (SSGD) and runs much faster than SSGD in heterogeneous computing environments. 
\end{itemize}



\section{Related Work}
\subsection{Differential Privacy} 
Differential privacy (DP), first introduced by Dwork \emph{et al.} \cite{dwork2006our}, is a mathematical definition for the privacy loss associated with any data release drawn from a statistical database. The basic property of DP mechanism is that the change of the output probability distribution is limited when the input of the algorithm is slightly disturbed. Formally, it says:
\begin{deff}[($\epsilon,\delta$)-DP  \cite{dwork2006our}]
	A randomized mechanism $\mathcal{M}:\mathcal{X}^n\to\mathcal{R}$ satisfies ($\epsilon,\delta$)-differential privacy, or ($\epsilon,\delta$)-DP for short, if for all $\mathbf{x}, \mathbf{x}'\in\mathcal{X}^n$ differing on a single entry, and for any subset of outputs $S \subseteq \mathcal{R}$, it holds that
	\begin{equation}
	Pr[\mathcal{M}(\mathbf{x})\in S]\leq e^{\epsilon}Pr[\mathcal{M}(\mathbf{x}')\in S] + \delta.
	\end{equation}
\end{deff}

The parameter $\epsilon$ balances the accuracy of the differentially private $\mathcal{M}$ and how much it leaks \cite{shokri2015privacy}. The presence of a non-zero $\delta$ allows us to relax the strict relative shift in unlikely events \cite{dwork2006our}. 

Although by relaxing the guarantee to ($\epsilon,\delta$)-DP, advanced composition allows tighter analyses for compositions of (pure) differentially private mechanisms ($\delta=0$), iterating this process quickly leads to a combinatorial explosion of parameters  \cite{mironov2017renyi}. To address the shortcomings of ($\epsilon,\delta$)-DP, Mironov \emph{et al.}  \cite{mironov2017renyi} proposed a natural relaxation of differential privacy based on the R{\'e}nyi divergence, \emph{i.e.}, R\'{e}nyi differential privacy (RDP). 
The new definition RDP shares many important properties with the standard definition of differential privacy, while additionally allowing for a more rigorous analysis of composite heterogeneous mechanisms \cite{mironov2017renyi}.



\subsection{Differentially Private Distributed Learning}
Existing literature on differentially private distributed learning either focus on centralized learning or synchronous communication or convex problems. Our work 
combines decentralized learning, asynchronous communication and non-convex optimization in a DP setting. In contrast, Cheng \emph{et al.} \cite{cheng2018leasgd,cheng2019towards} focus on decentralized learning systems and aim to achieve differential privacy, but their convergence analysis is applied to strongly convex problems only. Bellet \emph{et al.} \cite{bellet2017fast,bellet2017personalized} obtain an efficient and fully decentralized protocol working in an asynchronous fashion by a block coordinate descent algorithm and make it differentially private with Laplace mechanism, but their convergence analysis is only applied to convex problems. 

Lu \emph{et al.} \cite{lu2019differentially} propose a differentially private asynchronous federated learning scheme for resource sharing in vehicular networks. They perform the convergence boosting by updates verification and weighted aggregation without any theoretical analysis. 
Li \emph{et al.} \cite{li2019asynchronous} aims to secure asynchronous edge-cloud collaborative federated learning with differential privacy. But they choose centralized learning and conduct analysis under the convex condition. In this paper, we focus on achieving differential privacy in asynchronous decentralized communication setting and dealing with non-convex problems.

\section{\adpdpsgd Algorithm}





Specifically, the decentralized communication network is modeled via an undirected graph $G=([K],\,\mathcal{E})$, consisting of the set $[K]$ of nodes and the set $\mathcal{E}$ of edges. The set $\mathcal{E}$ are unordered pairs of elements of $[K]$. Node $k\in [K]$ is only connected to a subset of the other nodes, and not necessarily all of them. We allow the information collected at each node to be propagated throughout the network.
To improve flexibility and scalability, we consider the asynchronous communication where the participant workers do not operate in a lock-step~\cite{lian2018asynchronous}.

During optimization, each worker maintains a local copy of the optimization variable. Suppose that all local models are initialized with the same initialization, \emph{i.e.}, $\mathbf{w}_k^0=\mathbf{w}^0, k=1,...,K$. Let $\mathbf{w}_k^t$ denote the value at worker $k$ after $t$ iterations. We implement stochastic gradient descent in a decentralized asynchronous manner by the following steps, which are executed in parallel at every worker, $k=1,...,K$:
\begin{itemize}
	\item \textbf{Sample data}: Sample a mini-batch of training data denoted by $\{\xi_k^{i}\}_{i=1}^B$ from local memory of worker $k$ with the sampling probability $\frac{B}{n_{k}}$, where $B$ is the batch size.
	\item \textbf{Compute gradients}: Worker $k$ locally computes the stochastic gradient:
	$g^t(\hat{\mathbf{w}}_{k}^t;\xi_{k}^t):=\sum_{i=1}^B\triangledown F_{k}(\hat{\mathbf{w}}_{k}^t;\xi_{k}^{t,i})$,
	where $\hat{\mathbf{w}}_{k}^t$ is read from the local memory.
	\item \textbf{Averaging}: Randomly sample a doubly stochastic matrix
	$\mathbf{A}$ and average local models by:
	\begin{equation}\label{eq:update}
	[\mathbf{w}_1',\mathbf{w}_2',...,\mathbf{w}_K']\leftarrow[\mathbf{w}_1,\mathbf{w}_2,...,\mathbf{w}_K]\mathbf{A};
	\end{equation}
	Note that each worker runs the above process separately without any global synchronization.
	\item \textbf{Update model}: Worker $k$ locally updates the model:
	\begin{equation}\label{eq:update2}
	\mathbf{w}_{k} \leftarrow \mathbf{w}_{k}'-\eta g^t(\hat{\mathbf{w}}_{k};\xi_{k}),
	\end{equation}
	Noth that \textbf{averaging} step and \textbf{update model} step can run in parallel. $\hat{\mathbf{w}}_{k}$ may not be the same as $\mathbf{w}'_{k}$ since it may be modified by other workers in the last \textbf{averaging} step.
\end{itemize}
All workers simultaneously run the procedure above.

As we stated, the model is trained locally without revealing the input data or the model's output to any workers, thus it prevents the direct leakage while training or using the model. However, recall in the \textbf{averaging} step, the model variables exchange frequently during training. In this case, the workers still can infer some information about another worker's private dataset given the execution over the shared model variables~\cite{truex2018hybrid}. To solve this issue, we apply differential privacy to the exchanged model variables.

The general idea to achieve differential privacy is to add a stochastic component to the variables that need to be protected. In our case, the exchanged information is model variables $\mathbf{w}_k$. 
Note that the computation of $\mathbf{w}_k$ depends on the gradients. 
Thus, instead of adding noise directly on the exchanged model variable  $\mathbf{w}_k$, we inject the noise on the gradients:
\begin{equation*}
	\tilde{g}(\hat{\mathbf{w}}_{k};\xi_{k}) = g(\hat{\mathbf{w}}_{k};\xi_{k}) + \mathbf{n},
\end{equation*}
where $\mathbf{n}\sim \mathcal{N}(0,\sigma^2\triangle_2^2(g))$ is the Gaussian distribution. The global sensitivity estimate $\triangle_2(g)$ is expected significantly reduced, resulting in higher accuracy by ensuring the norm of all gradients is bounded for each update - either globally, or locally~\cite{shokri2015privacy}.

Then the \textbf{update model} step (\ref{eq:update2}) turns into:
$\mathbf{w}_{k} \leftarrow \mathbf{w}_{k}'-\eta \tilde{g}^t(\hat{\mathbf{w}}_{k};\xi_{k})$.
Differential privacy ensures that the addition or removal of a data sample does not substantially affect the outcome of any analysis. 
The specific procedures are summarized in Algorithm~\ref{alg:gaussian}.

\begin{algorithm}[htbp]
	\setlength{\abovecaptionskip}{-0.5cm}
	\caption{\adpdpsgd (logical view)}\label{alg:gaussian}
	\begin{algorithmic}[1]
		\State \textbf{Initialization}: Initialize all local models $\{\mathbf{w}_k^0\}_{k=1}^K\in\mathbb{R}^d$ with $\mathbf{w}^0$, learning rate $\eta$, batch size $B$, privacy budget $(\epsilon, \delta)$, and total number of iterations $T$.
		\State \textbf{Output}: $(\epsilon, \delta)$-differentially private local models.
		\For{\texttt{<$t = 0,1,...,T-1$>}}
		\State Randomly sample a worker $k^t$ of the graph $G$ and randomly sample an doubly stochastic averaging matrix $\mathbf{A}_t\in\mathbb{R}^{K\times K}$ dependent on $k^t$;
		\State Randomly sample a batch $\xi_{k^t}^t:=(\xi_{k^t}^{t,1},\xi_{k^t}^{t,2},...,\xi_{k^t}^{t,B})\in\mathbb{R}^{d\times B}$
		from local data of the $k^t$-th worker with the sampling probability $\frac{B}{n_{k^t}}$;
		\State Compute stochastic gradient $g^t(\hat{\mathbf{w}}_{k^t}^t;\xi_{k^t}^t)$ locally:
		$g^t(\hat{\mathbf{w}}_{k^t}^t;\xi_{k^t}^t):=\sum_{i=1}^B\triangledown F_{k^t}(\hat{\mathbf{w}}_{k^t}^t;\xi_{k^t}^{t,i})$;
		\State Add noise
		$\tilde{g}^t(\hat{\mathbf{w}}_{k^t}^t;\xi_{k^t}^t) = g^t(\hat{\mathbf{w}}_{k^t}^t;\xi_{k^t}^t) + \mathbf{n}$,
		where $\mathbf{n}\in\mathbb{R}^d\sim \mathcal{N}(0, \sigma^2\mathbf{I})$ and $\sigma$ is defined in Theorem~\ref{the:privacy}.
		\State Average local models by $
		[\mathbf{w}_1^{t+1/2},\mathbf{w}_2^{t+1/2},...,\mathbf{w}_K^{t+1/2}]\leftarrow[\mathbf{w}_1^t,\mathbf{w}_2^t,...,\mathbf{w}_K^t]\mathbf{A}_t$;
		\State Update the local model:
		$\mathbf{w}_{k^t}^{t+1} \leftarrow \mathbf{w}_{k^t}^{t+1/2}-\eta  \tilde{g}^t(\hat{\mathbf{w}}_{k^t}^t;\xi_{k^t}^t);\quad
		\forall j\neq k^t, \mathbf{w}_j^{t+1} \leftarrow \mathbf{w}_j^{t+1/2}$.
		\EndFor
	\end{algorithmic}
\end{algorithm}

\section{Theoretical Analysis}
In this section, we present the utility and privacy guarantees for \adpdpsgd. R{\'e}nyi differential privacy is introduced to provide tighter privacy analysis of composite heterogeneous mechanisms ~\cite{mironov2017renyi} while the convergence rate is consistent with \adpsgd.

\subsection{Utility Guarantee}
We make the following assumptions which are commonly used and consistent with the non-private version of \adpsgd ~\cite{mironov2017renyi} to present the utility guarantee. 

\begin{assum}\label{assum:gradient}
	Assumptions for stochastic optimization.
	\begin{itemize} 
		\item [1)] (Unbiased Estimation). 
		$\mathbb{E}_{\xi\sim\mathcal{D}_k}[\triangledown f(\mathbf{w};\xi)]=\triangledown F_k(\mathbf{w}),
		\mathbb{E}_{k\sim\mathcal{I}}[\triangledown
		F_k(\mathbf{w})]=\triangledown F(\mathbf{w})$.
		\item [2)] (Bounded Gradient Variance).
		\begin{equation}
		\begin{aligned}
		\mathbb{E}_{\xi\sim\mathcal{D}_k}\|\triangledown f(\mathbf{w};\xi)-\triangledown F_k(\mathbf{w})\|^2\leq \varsigma^2, \mathbb{E}_{k\sim\mathcal{I}}\|\triangledown F_k(\mathbf{w})-\triangledown F(\mathbf{w})\|^2 \leq \upsilon^2.
		\end{aligned}
		\end{equation}
	\end{itemize}
\end{assum}

\begin{assum}\label{assum:asyn}
	Assumptions for asynchronous updates.
	\begin{itemize}
		\item[1)] (Spectral Gap). There exists a $\rho\in[0,1)$ such that
		\begin{equation}
		\max\{|\lambda_2(\mathbb{E}[\mathbf{A}_t^{\top}\mathbf{A}_t])|,|\lambda_K(\mathbb{E}[\mathbf{A}_t^{\top}\mathbf{A}_t])|\}\leq\rho, \forall t,
		\end{equation}
		where $\lambda_i(\cdot)$ denotes the $i$-th largest eigenvalue of a matrix.
		\item[2)] (Independence). All random variables: $k, k^t, \xi^t\in \{0,1,2,...\}$ are independent. Doubly stochastic averaging matrix $\mathbf{A}_t\in\mathbb{R}^{K\times K}$ is a random variable dependent on $k^t$.
		\item[3)] (Bounded Staleness). 
		Let's denote $\hat{\mathbf{W}}^t=\mathbf{W}^{t-\tau_t}$ and there exists a constant $\tau$ such that $\max_{t} \tau_{t}\leq \tau$.
	\end{itemize}
\end{assum}
Note that a smaller $\rho$ means faster information propagation in the network, resulting in faster convergence.

\begin{theorem}
	\label{the:utility}
	Suppose all functions $f_i(\cdot)$'s are with L-Lipschitz continuous gradients, and each of $K$ workers has dataset $D^{(k)}$ of size $n_k$. Under Assumptions~\ref{assum:gradient} and \ref{assum:asyn}, if we choose $C_1>0$, $C_2\geq 0$ and $C_3\leq 1$,
	\begin{small}
		\begin{align}
			\frac{\sum_{t=0}^{T-1}\mathbb{E}\left\| \triangledown F(\theta^t)\right\|^2}{T}
			\leq \frac{2\left(\mathbb{E}F(\mathbf{w}^0)-\mathbb{E}F^{*}\right)K}{\eta TB}+\frac{2\eta L}{BK}(\varsigma^2B+6\upsilon^2B^2+d\sigma^2),\notag
		\end{align}
	\end{small}
	where $\theta^{t}$ denotes the average of all local models at $t$-th iteration, i.e., $\theta^{t}=\frac{1}{K}\sum_{k=1}^K \mathbf{w}_k^{t}$. And $C_1, C_2, C_3$ are respectively defined as 
	\begin{small}
		\begin{align}
			C_1:=& 1-24\eta^2B^2L^2\left(\tau\frac{K-1}{K}+\bar{\rho}\right), C_3:=\frac{1}{2}+\frac{\eta BL\tau^2}{K}+\left(6\eta^2B^2L^2+\eta KBL+\frac{12\eta^3B^3L^3\tau^2}{K}\right)\frac{2\bar{\rho}}{C_1},\notag\\
			C_2:=&-\left(\frac{\eta BL^2}{K} +\frac{6\eta^2B^2L^3}{K^2}+\frac{12\eta^3B^3L^4\tau^2}{K^3}\right)\frac{4\eta^2B^2\left(\tau\frac{K-1}{K}+\bar{\rho}\right)}{C_1} +\frac{\eta B}{2K}-\frac{\eta^2B^2L}{K^2}-\frac{2\eta^3B^3L^2\tau^2}{K^3}, \notag
		\end{align}
	\end{small}
	where $\bar{\rho}=\frac{K-1}{K}\left(\frac{1}{1-\rho}+\frac{2\sqrt{\rho}}{(1-\sqrt{\rho})^2}\right)$.
\end{theorem}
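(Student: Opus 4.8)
The plan is to follow the standard template for analysing decentralized SGD, adapted to the asynchronous and differentially private setting, and to track the averaged iterate $\theta^t=\frac{1}{K}\mathbf{W}^t\mathbf{1}$ where $\mathbf{W}^t:=[\mathbf{w}_1^t,\dots,\mathbf{w}_K^t]$. First I would collect the per-worker rules into matrix form: the averaging step is $\mathbf{W}^{t+1/2}=\mathbf{W}^t\mathbf{A}_t$ and the noisy gradient step modifies only column $k^t$, so because $\mathbf{A}_t$ is doubly stochastic ($\mathbf{A}_t\mathbf{1}=\mathbf{1}$) the consensus direction is untouched by mixing and $\theta^{t+1}=\theta^t-\frac{\eta}{K}\tilde{g}^t(\hat{\mathbf{w}}_{k^t}^t;\xi_{k^t}^t)$. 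Applying $L$-smoothness of $F$ at $\theta^t$ gives the descent inequality $\mathbb{E}F(\theta^{t+1})\le\mathbb{E}F(\theta^t)-\frac{\eta}{K}\mathbb{E}\langle\triangledown F(\theta^t),\tilde{g}^t\rangle+\frac{\eta^2L}{2K^2}\mathbb{E}\|\tilde{g}^t\|^2$. Taking expectation over the injected noise $\mathbf{n}\sim\mathcal{N}(0,\sigma^2\mathbf{I})$ removes the cross term it contributes (it is independent and zero-mean) and turns the last term into $\frac{\eta^2L}{2K^2}(\mathbb{E}\|g^t\|^2+d\sigma^2)$, which already explains the $d\sigma^2$ that appears in the bound; the $\varsigma^2B+6\upsilon^2B^2$ piece then comes from decomposing $\mathbb{E}\|g^t\|^2$ via Assumption~\ref{assum:gradient} and Young's inequality.

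Next I would control the two sources of bias in the inner-product term, namely that the gradient is evaluated at the stale local copy $\hat{\mathbf{w}}_{k^t}^t=\mathbf{w}_{k^t}^{t-\tau_t}$ rather than at the consensus point $\theta^t$. Using Assumption~\ref{assum:gradient} to pass from the stochastic gradient to $\triangledown F_{k^t}$ in expectation, and the independence part of Assumption~\ref{assum:asyn} to average over the sampled worker $k^t$, the quantities that must be bounded are the consensus error $\sum_t\sum_k\mathbb{E}\|\mathbf{w}_k^t-\theta^t\|^2$ and the staleness drift $\sum_t\mathbb{E}\|\hat{\mathbf{w}}_{k^t}^t-\mathbf{w}_{k^t}^t\|^2$. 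For the consensus error I would iterate the recursion $\mathbf{W}^{t+1}(\mathbf{I}-\frac{1}{K}\mathbf{1}\mathbf{1}^\top)=\big(\mathbf{W}^t\mathbf{A}_t-\eta\,\tilde{g}^t e_{k^t}^\top\big)(\mathbf{I}-\frac{1}{K}\mathbf{1}\mathbf{1}^\top)$, where $e_{k^t}$ is the $k^t$-th standard basis vector, expand the resulting product of mixing matrices, and apply the spectral-gap bound $\max\{|\lambda_2|,|\lambda_K|\}(\mathbb{E}[\mathbf{A}_t^\top\mathbf{A}_t])\le\rho$ together with the gradient-variance and smoothness estimates; summing the ensuing geometric-type series is exactly what produces $\bar{\rho}=\frac{K-1}{K}\big(\frac{1}{1-\rho}+\frac{2\sqrt{\rho}}{(1-\sqrt{\rho})^2}\big)$. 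For the staleness drift I would use $\max_t\tau_t\le\tau$ to express $\hat{\mathbf{w}}_{k^t}^t-\mathbf{w}_{k^t}^t$ as a sum of at most $\tau$ recent increments and bound each increment by its gradient-norm and noise contributions, which is where the factors $\tau$ and $\tau^2$ decorating $C_1$, $C_2$, $C_3$ enter.

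The final step is bookkeeping: substitute the consensus and staleness bounds into the summed descent inequality, telescope $\sum_{t=0}^{T-1}(\mathbb{E}F(\theta^t)-\mathbb{E}F(\theta^{t+1}))=\mathbb{E}F(\mathbf{w}^0)-\mathbb{E}F(\theta^T)\le\mathbb{E}F(\mathbf{w}^0)-\mathbb{E}F^*$, and collect the aggregated coefficients of $\frac{1}{T}\sum_t\mathbb{E}\|\triangledown F(\theta^t)\|^2$, of the cross terms between $\triangledown F$ and the drift terms, and of the residual drift terms themselves — these are precisely $C_1$, $C_3$, and $C_2$. The hypotheses $C_1>0$, $C_2\ge0$, $C_3\le1$ (which amount to requiring $\eta$ small relative to $L,\tau,K,\bar{\rho}$) are exactly what permit discarding the nonnegative leftover terms, leaving $\frac{2(\mathbb{E}F(\mathbf{w}^0)-\mathbb{E}F^*)K}{\eta TB}+\frac{2\eta L}{BK}(\varsigma^2B+6\upsilon^2B^2+d\sigma^2)$ after dividing through. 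I expect the main obstacle to be the simultaneous coupling introduced by asynchrony and decentralization: the consensus recursion feeds the staleness estimate and vice versa, so the two bounds must be carried jointly and the constants tuned so these cross-dependencies close, which is the delicate part of reproducing the stated $C_1,C_2,C_3$.
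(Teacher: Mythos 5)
Your plan matches the paper's proof essentially step for step: the descent lemma on the averaged iterate $\theta^t$ (using double stochasticity of $\mathbf{A}_t$ so mixing leaves the average untouched), the zero-mean noise contributing only the $\frac{\eta^2 L d\sigma^2}{2K^2}$ term, the split of the gradient bias into a staleness drift of at most $\tau$ increments plus a consensus error controlled by the spectral gap (yielding $\bar{\rho}$ via the summed geometric series, which the paper delegates to Lemmas 5--7 of Lian et al.\ and its own Lemma~\ref{lem:T3}), and the final telescoping with the conditions $C_1>0$, $C_2\geq 0$, $C_3\leq 1$ used to discard leftover terms. The proposal is correct in approach and is the same route as the paper's.
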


Note that 
$\theta^0=\frac{1}{K}\sum_{k=1}^K\mathbf{w}_k^0=\mathbf{w}^0$ and $F^*$ denotes the optimal solution to (\ref{eq:dml}). Theorem~\ref{the:utility} describes the convergence of the average of all local models. By appropriately choosing the learning rate, we obtain the following proposition.

\begin{prop}\label{prop:rate}
	In Theorem~\ref{the:utility}, if the total number of iterations is sufficiently large, in particular,
	\begin{small}
		\begin{align}
			T\geq&L^2K^2
			\max\biggr\{192\left(\tau\frac{K-1}{K}+\bar{\rho}\right), 
			1024K^2\bar{\rho}^2,\frac{64\tau^2}{K^2}, \frac{(K-1)^{1/2}}{K^{1/6}}\left(8\sqrt{6}\tau^{2/3}+8 \right)^2\left(\tau+\bar{\rho}\frac{K}{K-1}\right)^{2/3}\biggr\},\notag
		\end{align}
	\end{small}
	and we choose learning rate $\eta=\frac{K}{B\sqrt{T}}$, then we obtain the following convergence rate
	\begin{small}
		\begin{align}\label{eq:prop1}
			\frac{\sum_{t=0}^{T-1}\mathbb{E}\left\| \triangledown F(\theta^t)\right\|^2}{T}
			\leq \frac{2(F(\mathbf{w}^0)-F^{*})+2L(\varsigma^2/B+6\upsilon^2+d\sigma^2/B^2)}{\sqrt{T}}.\notag
		\end{align}
	\end{small}
\end{prop}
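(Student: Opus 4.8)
The plan is to treat the proposition as a corollary of Theorem~\ref{the:utility}. With the choice $\eta=\tfrac{K}{B\sqrt T}$ only two things remain: (a) substitute this $\eta$ into the right-hand side of Theorem~\ref{the:utility} and simplify, and (b) check that the stated lower bound on $T$ forces the hypotheses $C_1>0$, $C_2\ge0$, $C_3\le1$. Step (a) is immediate: since $\eta=\tfrac{K}{B\sqrt T}$ gives $\eta B=\tfrac{K}{\sqrt T}$, $\eta^2B^2=\tfrac{K^2}{T}$ and $\eta TB=K\sqrt T$, the first term of the bound becomes $\tfrac{2(\mathbb{E}F(\mathbf{w}^0)-\mathbb{E}F^*)K}{\eta TB}=\tfrac{2(F(\mathbf{w}^0)-F^*)}{\sqrt T}$ (the expectations being redundant, as $\mathbf{w}^0$ is the deterministic initialization and $F^*$ a constant) and the second becomes $\tfrac{2\eta L}{BK}(\varsigma^2B+6\upsilon^2B^2+d\sigma^2)=\tfrac{2L}{B^2\sqrt T}(\varsigma^2B+6\upsilon^2B^2+d\sigma^2)=\tfrac{2L}{\sqrt T}\bigl(\tfrac{\varsigma^2}{B}+6\upsilon^2+\tfrac{d\sigma^2}{B^2}\bigr)$; adding the two reproduces exactly the claimed bound, so everything hinges on (b).

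For (b), substituting $\eta=\tfrac{K}{B\sqrt T}$ turns each of $C_1,C_2,C_3$ into a finite sum of monomials in $T^{-1/2},T^{-1},T^{-3/2},T^{-2},T^{-5/2}$ whose coefficients are explicit polynomials in $K,L,\tau,\bar\rho$. Writing $a:=\tau\tfrac{K-1}{K}+\bar\rho$, one finds $C_1=1-\tfrac{24K^2L^2a}{T}$, so the first clause of the $\max$ (namely $T\ge192L^2K^2a$) already yields $\tfrac{24K^2L^2a}{T}\le\tfrac18$, hence $C_1\ge\tfrac78>0$ and $C_1^{-1}\le2$. With $C_1$ controlled this way, $C_3-\tfrac12$ and the negative part of $C_2$ are sums of roughly a dozen such monomials, while the available positive budget is $\tfrac12$ for the requirement $C_3\le1$ and $\tfrac{\eta B}{2K}=\tfrac{1}{2\sqrt T}$ for $C_2\ge0$. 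The plan is then to split each budget into a fixed number of equal shares and to verify that the four clauses of the $\max$ drive the monomials below their shares: the first clause controls the $a$-weighted $T^{-1}$ and $T^{-1/2}$ terms (and the bare $\tfrac{L}{\sqrt T}$ term); the second controls the $\bar\rho$-weighted $T^{-1/2}$ term of $C_3$, its exponent $\bar\rho^2$ being exactly what makes that term $O(1)$ after the square root; the third controls the staleness ($\tau^2$) monomials; and the fourth — whose fractional exponents $\tau^{2/3}$ and $(\tau+\bar\rho\tfrac{K}{K-1})^{2/3}$ are precisely tuned to dominate the mixed higher-order cross-terms such as $\tfrac{6\eta^2B^2L^3}{K^2}\cdot\tfrac{4\eta^2B^2a}{C_1}$ and $\tfrac{12\eta^3B^3L^4\tau^2}{K^3}\cdot\tfrac{4\eta^2B^2a}{C_1}$ in $C_2$ — controls the remaining cross-terms. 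Once every monomial lies below its allotted share we have $C_1>0$, $C_2\ge0$, $C_3\le1$, and the proposition follows by the computation of step (a).

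The hard part will be this last bookkeeping: expanding $C_2$ and $C_3$ under $\eta=\tfrac{K}{B\sqrt T}$, grouping the monomials by their power of $T$, and checking term by term that the four clauses of the $\max$ (together with $C_1^{-1}\le2$) push each below its fraction of the positive budget ($\tfrac12$ for $C_3$, $\tfrac{1}{2\sqrt T}$ for $C_2$). No inequality beyond monotonicity in $T$ and $C_1^{-1}\le2$ enters; the delicate point is merely choosing the constant split so that the stated threshold — in particular its fourth, fractional-exponent clause — emerges cleanly instead of some messier sufficient condition. Having established (b), the $\mathcal{O}(1/\sqrt T)$ rate of the proposition is exactly (a).
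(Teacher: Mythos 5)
Your overall strategy is the right one, and in fact it is the only one available: the paper states Proposition~\ref{prop:rate} without any proof (the appendix only proves Theorem~\ref{the:utility}, Lemma~\ref{lem:T3}, and Theorem~\ref{the:privacy}), so the proposition is implicitly left as ``substitute $\eta=K/(B\sqrt{T})$ into Theorem~\ref{the:utility} and check $C_1>0$, $C_2\ge 0$, $C_3\le 1$.'' Your step (a) is carried out completely and correctly, and your observation that $C_1=1-24K^2L^2a/T\ge 7/8$ under the first clause is also correct.

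The problem is that step (b) is where the entire nontrivial content of the proposition lives, and you do not actually execute it: you describe a budget-splitting plan and assert that the four clauses of the $\max$ ``drive the monomials below their shares,'' but you never exhibit the term-by-term inequalities. This is a genuine gap, not a stylistic one, because the bookkeeping does not go through as smoothly as you predict. Take the single term $\frac{\eta BL\tau^2}{K}=\frac{L\tau^2}{\sqrt{T}}$ in $C_3$. For $C_3\le 1$ you need this to be at most a fixed fraction of $\frac12$, which requires $T\gtrsim L^2\tau^4$. The third clause only supplies $T\ge 64L^2\tau^2$, giving $\frac{L\tau^2}{\sqrt T}\le\frac{\tau}{8}$, and the fourth clause scales like $L^2K^{7/3}\tau^2$ (up to the $\bar\rho$ factor), which likewise leaves a residual factor growing in $\tau$; the first and second clauses are linear in $\tau$ and independent of $\tau$ respectively. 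So at least one monomial is not obviously dominated by any clause of the stated threshold, and your claim that ``the four clauses \ldots emerge cleanly'' cannot be taken on faith. To complete the proof you must write out all monomials of $C_2$ and $C_3$ under $\eta=K/(B\sqrt T)$, assign each to a clause, and verify the resulting inequality explicitly --- and in doing so either locate the clause that handles $L\tau^2/\sqrt T$ (perhaps under an implicit normalization such as $\tau\le 1$ or a reinterpretation of the third clause) or conclude that the threshold needs a $\tau^4$ term. Until that is done, the proposal establishes only the easy half of the proposition.
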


Proposition~\ref{prop:rate} indicates that if the total number of iterations is sufficiently large, the convergence rate of  \adpdpsgd is $\mathcal{O}(1/\sqrt{T})$ which is consistent with the convergence rate of \adpsgd. This observation indicates that the differentially private version inherits the strengths of \adpsgd.

\subsection{Privacy Guarantee}
\begin{theorem}[Privacy Guarantee]\label{the:privacy}
	Suppose all functions $f_i(\cdot)$'s are $G$-Lipschitz and each of $K$ workers has dataset $D^{(k)}$ of size $n_k$. Given the total number of iterations $T$, for any $\delta>0$ and privacy budget $\epsilon\leq 10B^2T\alpha/(3K^2n_{(1)}^2\mu)$, \adpdpsgd with injected Gaussian noise $\mathcal{N}(0,\sigma^2\mathbf{I})$ is $(\epsilon,\delta)$-differentially private with $\sigma^2=20G^2T\alpha/(K^2n_{(1)}^2\mu\epsilon)$, where $\alpha=\log(1/\delta)/((1-\mu)\epsilon)+1$, if there exits $\mu\in(0,1)$ such that
	\begin{small}
		\begin{equation}
		\alpha\leq\log\left(\frac{K^3n_{(1)}^3\mu\epsilon}{K^2n_{(1)}^2\mu\epsilon B+5T\alpha B^3}\right),
		\end{equation}
	\end{small}
	where  $n_{(1)}$ is the size of the smallest dataset among the $K$ workers.
\end{theorem}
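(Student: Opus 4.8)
The plan is to read Algorithm~\ref{alg:gaussian} as an adaptive composition of $T$ \emph{subsampled Gaussian mechanisms}, track the cumulative privacy loss in the R\'enyi DP (RDP) domain where heterogeneous mechanisms compose cleanly \cite{mironov2017renyi}, and convert back to $(\epsilon,\delta)$-DP only at the very end; the free parameter $\mu\in(0,1)$ splits the budget $\epsilon$ between the composition part and the conversion part. First note that in each iteration the only data-dependent object produced is the noisy gradient $\tilde g^t=g^t+\mathbf n$: the published models $\mathbf w^{t+1}$ are obtained from $\{\tilde g^s\}_{s\le t}$, the initialization, and the (data-independent, by Assumption~\ref{assum:asyn}.2) choices $k^t,\mathbf A_t$ through a fixed map, so by the post-processing invariance of DP it suffices to bound the privacy of the sequence $(\tilde g^0,\dots,\tilde g^{T-1})$ with respect to changing one record.

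\textbf{Per-iteration mechanism and its regime of validity.} Fix a record and let $k$ be the worker owning it. In iteration $t$, $\tilde g^t$ depends on this record only if $k^t=k$ (probability $1/K$) and the record falls into the batch (probability $B/n_k\le B/n_{(1)}$), i.e.\ with overall sampling rate $q\le B/(Kn_{(1)})$; since every $f_i$ is $G$-Lipschitz, the record contributes at most $G$ in $\ell_2$-norm to $g^t=\sum_{i=1}^B\nabla F_{k}(\hat{\mathbf w}_{k}^t;\xi^{t,i})$. Hence iteration $t$ is a subsampled Gaussian mechanism with rate $q$ and noise level $\sigma$. Invoking the standard RDP bound for the subsampled Gaussian mechanism \cite{mironov2017renyi,abadi2016deep}, for any order $\alpha$ in an admissible window of the shape $\alpha\lesssim\log\frac{1}{q(1+\cdots)}$ it is $(\alpha,\varepsilon_1(\alpha))$-RDP with $\varepsilon_1(\alpha)\le c\,q^2\alpha G^2/\sigma^2$ for an absolute constant $c$. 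Substituting $q\le B/(Kn_{(1)})$ and the target $\sigma^2=20G^2T\alpha/(K^2n_{(1)}^2\mu\epsilon)$, the admissible window becomes exactly the displayed implicit condition
\begin{small}
\begin{equation*}
\alpha\le\log\!\left(\frac{K^3n_{(1)}^3\mu\epsilon}{K^2n_{(1)}^2\mu\epsilon B+5TB^3\alpha}\right),
\end{equation*}
\end{small}
whose leading behaviour is $\alpha\lesssim\log(1/q)=\log(Kn_{(1)}/B)$, while the auxiliary ``noise large enough'' requirement of the bound (an inequality of the form $\sigma^2\gtrsim G^2/B^2$) rearranges into the stated constraint $\epsilon\le 10B^2T\alpha/(3K^2n_{(1)}^2\mu)$.

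\textbf{Composition and conversion.} Because RDP curves of (heterogeneous) mechanisms add under composition \cite{mironov2017renyi}, the $T$-fold composition is $(\alpha,\varrho)$-RDP with $\varrho\le T\varepsilon_1(\alpha)\le c\,Tq^2\alpha G^2/\sigma^2$; plugging the chosen $\sigma^2$ into the right-hand side makes it equal to $\mu\epsilon$ — this is precisely what pins down the constant $20$. Finally apply the RDP-to-DP conversion: $(\alpha,\varrho)$-RDP implies $(\varrho+\tfrac{\log(1/\delta)}{\alpha-1},\delta)$-DP. Taking $\alpha=1+\tfrac{\log(1/\delta)}{(1-\mu)\epsilon}$ — the value in the statement — gives $\tfrac{\log(1/\delta)}{\alpha-1}=(1-\mu)\epsilon$, so the total guarantee is $\varrho+(1-\mu)\epsilon\le\mu\epsilon+(1-\mu)\epsilon=\epsilon$. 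The two conditions required along the way (the admissible window for $\alpha$ and the lower bound on $\sigma$) are exactly the two hypotheses of Theorem~\ref{the:privacy}.

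\textbf{Main obstacle.} Everything structural is routine (post-processing reduction, subsampling, linear RDP composition, tail-bound conversion); the delicate part is the per-iteration step, where one needs a subsampled-Gaussian RDP bound that is explicit enough in its constants that, after substituting $q=B/(Kn_{(1)})$ and the target $\sigma^2$, the composed RDP collapses to exactly $\mu\epsilon$, and whose explicit validity range (both the window for $\alpha$ and the lower bound on $\sigma$) reproduces the two displayed inequalities. All of the numerical constants ($20$, the $5$ inside the logarithm, the $10/3$) originate here, so the careful bookkeeping in matching the generic amplification bound to these exact expressions is where the work concentrates.
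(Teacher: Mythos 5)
Your proposal follows the paper's proof essentially step for step: model each iteration as a subsampled Gaussian mechanism with rate $\gamma\le B/(Kn_{(1)})$, apply the subsampled-Gaussian RDP bound of the form $5\gamma^2\alpha\triangle_2^2/\sigma^2$ (valid when $\sigma^2/\triangle_2^2\ge 1.5$ and $\alpha\le\log(1/(\gamma(1+\sigma^2/\triangle_2^2)))$), compose linearly over $T$ iterations, and convert to $(\epsilon,\delta)$-DP with the budget split $\mu\epsilon+(1-\mu)\epsilon$ via $\alpha=1+\log(1/\delta)/((1-\mu)\epsilon)$. The two side conditions of the theorem arise exactly where you place them.

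The one point you must fix is the sensitivity, which you defer to ``bookkeeping'' but actually resolve incorrectly: you take the per-record contribution to the \emph{sum} $g^t=\sum_{i=1}^B\nabla F_k(\cdot;\xi^{t,i})$ to be $G$, i.e.\ $\triangle_2\approx 2G$. With that choice the composed RDP is $20TB^2G^2\alpha/(K^2n_{(1)}^2\sigma^2)$, and setting it equal to $\mu\epsilon$ forces $\sigma^2=20G^2TB^2\alpha/(K^2n_{(1)}^2\mu\epsilon)$ --- a factor $B^2$ larger than the theorem's value, and the two displayed side conditions change accordingly. The paper instead uses the $B$-normalized sensitivity $\triangle_2=\|g(\xi)-g(\xi')\|_2/B\le 2G/B$ (effectively treating the query as the batch-averaged gradient), and it is precisely this $2G/B$ that produces the constant $20$ in $\sigma^2$, the threshold $\sigma^2\ge 1.5\cdot(2G/B)^2=6G^2/B^2$ (hence $\epsilon\le 10B^2T\alpha/(3K^2n_{(1)}^2\mu)$), and the $5T\alpha B^3$ term inside the logarithm. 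You implicitly acknowledge this when you write the noise condition as $\sigma^2\gtrsim G^2/B^2$, which is inconsistent with a sensitivity of order $G$. So to close the argument as stated you need to commit to the sensitivity $2G/B$ (equivalently, redefine the mechanism's query as the averaged gradient); with your stated sensitivity the claimed constants cannot be recovered by any absolute constant $c$.
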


Additionally, we observe that differential privacy is also guaranteed for each intermediate model estimator of each worker:
\begin{remark}
	At each iteration $t\in[T]$, intermediate model estimator $\mathbf{w}_{k}^t$ is $(\sqrt{t/T}\epsilon, \delta)$-differentially private, $k\in[K]$.
\end{remark}

Recall Theorem~\ref{the:utility}, the difference of introducing Gaussian mechanism lies on the term $2L(d\sigma^2/B^2)/\sqrt{T}$ compared to \adpsgd. The Gaussian noise injected in each iteration is proportional to the total number of iterations. That is, to achieve differential privacy, we need to pay for a constant term which is proportional to the added noise at each iteration.
By assuming all functions $f_i(\cdot)$'s are with G-Lipschitz and plugging the noise level into the Proposition~\ref{prop:rate}, we obtain the following Proposition.

\begin{prop}[Utility Guarantee]\label{prop:pay}
	Suppose all functions $f_i(\cdot)$'s are G-Lipschitz in Proposition~\ref{prop:rate}. Given $\epsilon,\delta>0$, under the same conditions of Theorem~\ref{the:privacy}, if the number of iterations $T$ further satisfies 
	\begin{small}
		\begin{equation}
		T=\frac{2\left(F(\mathbf{w}^0)-F^{*}+L(\varsigma^2/B+6\upsilon^2)\right)K^2n_{(1)}^2\epsilon^2}
		{40dLG^2\log(1/\delta)},
		\end{equation}
	\end{small}
	let $C_4=4\sqrt{5}\left(1+\frac{1}{B^2\mu(1-\mu)}\right)$, then \adpdpsgd's output $\tilde{\theta}=\sum_{t=1}^T\theta^t$ satisfies
	\begin{small}
		\begin{align}
			&\mathbb{E}\left\| \triangledown F(\tilde{\theta})\right\|^2 
			\leq C_4\frac{G\sqrt{dL\left(F(\mathbf{w}^0)-F^{*}+L(\varsigma^2/B+6\upsilon^2)\right)\log(1/\delta)}}{Kn_{(1)}\epsilon}.\notag
		\end{align}
	\end{small}
\end{prop}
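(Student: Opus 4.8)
The plan is to substitute the privacy-calibrated noise variance from Theorem~\ref{the:privacy} into the convergence bound of Proposition~\ref{prop:rate}, and then choose $T$ so that the optimization error and the privacy cost are of the same order. I read the output as the randomly selected iterate $\tilde\theta$ (uniform over $t\in\{0,\dots,T-1\}$, the standard non-convex SGD convention), so that $\mathbb{E}\|\triangledown F(\tilde\theta)\|^2=\frac1T\sum_{t}\mathbb{E}\|\triangledown F(\theta^t)\|^2$ is exactly the quantity bounded by Proposition~\ref{prop:rate}, i.e.\ by $\bigl(2(F(\mathbf{w}^0)-F^*)+2L(\varsigma^2/B+6\upsilon^2+d\sigma^2/B^2)\bigr)/\sqrt T$. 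Since the $f_i$ are now $G$-Lipschitz, Theorem~\ref{the:privacy} lets me take $\sigma^2=20G^2T\alpha/(K^2 n_{(1)}^2\mu\epsilon)$ with $\alpha=\log(1/\delta)/((1-\mu)\epsilon)+1$ while keeping $(\epsilon,\delta)$-DP. As $\sigma^2$ is linear in $T$, the term $2Ld\sigma^2/(B^2\sqrt T)$ becomes $\Theta(\sqrt T)$, while the remaining numerator $A:=2\bigl(F(\mathbf{w}^0)-F^*+L(\varsigma^2/B+6\upsilon^2)\bigr)$ is $T$-independent, so the bound has the shape $g(T)=A/\sqrt T+C\sqrt T$ with $C:=40LdG^2\alpha/(B^2K^2 n_{(1)}^2\mu\epsilon)$.

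Next I would optimize $g$ over $T$: it is minimized at $T^\star=A/C$ with value $2\sqrt{AC}$, and the $T$ displayed in the statement is exactly this balancing value after replacing $\alpha$ by its leading term $\log(1/\delta)/((1-\mu)\epsilon)$ and removing a $T$-independent $B^2\mu(1-\mu)$ factor. Rather than re-deriving the optimum, I would just plug the displayed $T$ into $g$ and bound the two pieces. The first piece $A/\sqrt T$ collapses — cancel $\sqrt A$, $Kn_{(1)}\epsilon$, and the constants ($\sqrt2\cdot\sqrt{40}=4\sqrt5$) — to precisely $4\sqrt5\,G\sqrt{dL\bigl(F(\mathbf{w}^0)-F^*+L(\varsigma^2/B+6\upsilon^2)\bigr)\log(1/\delta)}/(Kn_{(1)}\epsilon)$, the leading $4\sqrt5$ term of $C_4$. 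The second piece $C\sqrt T$ reduces, after the same cancellations and after inserting $\alpha=\log(1/\delta)/((1-\mu)\epsilon)+1$, to $4\sqrt5/(B^2\mu(1-\mu))$ times the same base quantity, the stray ``$+1$'' being dominated in the mild regime $\epsilon\le\log(1/\delta)$ (which also keeps $T\ge1$). Adding the two pieces gives the claimed bound with $C_4=4\sqrt5\bigl(1+1/(B^2\mu(1-\mu))\bigr)$.

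The real content here is bookkeeping, not any single inequality. The two care points are: (i) making the numerical constants land exactly on $C_4$, which hinges on how the ``$+1$'' in $\alpha$ and the $B^2,\mu,(1-\mu)$ factors get absorbed; and (ii) checking that the prescribed $T$ is compatible with the hypotheses it sits on top of — the ``$T$ sufficiently large'' condition of Proposition~\ref{prop:rate} together with $\epsilon\le 10B^2T\alpha/(3K^2 n_{(1)}^2\mu)$ and the logarithmic condition on $\alpha$ from Theorem~\ref{the:privacy} — since these couple $T,\epsilon,\delta,\mu$ and must all hold simultaneously with the displayed value of $T$. I would carry the latter as standing admissibility conditions; beyond them the argument is just substitution and simplification. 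I expect step (ii), reconciling the chosen $T$ with the earlier large-$T$ and privacy-budget constraints, to be the main obstacle, since it is the only place where the parameters are genuinely constrained rather than merely simplified.
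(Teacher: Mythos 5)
The paper provides no written proof of this proposition, but your derivation --- substituting the privacy-calibrated $\sigma^2$ of Theorem~\ref{the:privacy} into Proposition~\ref{prop:rate}, recognizing the resulting bound as $A/\sqrt{T}+C\sqrt{T}$, plugging in the displayed balancing value of $T$, and reading $\tilde\theta$ as a uniformly sampled iterate so that $\mathbb{E}\|\triangledown F(\tilde\theta)\|^2$ equals the averaged gradient norm --- is clearly the intended argument, and your constant bookkeeping (the $4\sqrt{5}$ from $\sqrt{80}$ and the $1/(B^2\mu(1-\mu))$ from the leading term of $\alpha$) lands exactly on $C_4$. The residual looseness you correctly flag --- that the $+1$ in $\alpha$ is only absorbed approximately, and that the chosen $T$ must be checked against the large-$T$ and privacy-budget admissibility conditions --- reflects imprecision in the paper's own statement rather than a gap in your reasoning.
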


\section{Experiments}
We implement Synchronous SGD (\sync) as the "golden" baseline to examine if \adpdpsgd can achieve the best possbile model accuracy while maintaining DP protection as no existing FL training method has been proven to outperform \sync for the final model accuracy. In \sync, we place an ``allreduce'' (sum) call after each learner's weight update in each iteration and then take the average of the weights across all the learners. 
We leave detailed system design and implementation to Appendix~\ref{sec:design}.

\label{sec:meth}
\subsection{Dataset and Model}
We evaluate on two deep learning tasks: computer vision and speech recognition. For computer vision task, we evaluate on \cifar dataset~\cite{krizhevsky2009learning} with 9 representative convolutional neural network (CNN) models~\cite{pytorch-cifar}: \shuf~\cite{shufflenet}, \mobilenetvtwo~\cite{mobilenetv2}, \eff~\cite{efficientnet}, \mobilenet~\cite{mobilenet}, \googlenet~\cite{googlenet}, \resnext~\cite{resnext}, \resnet~\cite{resnet}, \senet~\cite{senet}, \vgg~\cite{vgg}. Among these models, ShuffleNet, MobileNet(V2), EfficientNet represent the low memory footprint models that are widely used on mobile devices, where federated learnings is often used. The other models are standard CNN models that aim for high accuracy.

For speech recognition task, we evaluate on \swb dataset. The acoustic model is a long short-term memory (LSTM) model with 6 bi-directional layers. Each layer contains 1,024 cells (512 cells in each direction). On top of the LSTM layers, there is a linear projection layer with 256 hidden units, followed by a softmax output layer with 32,000 (i.e. 32,000 classes) units corresponding to context-dependent HMM states. The LSTM is unrolled with 21 frames and trained with non-overlapping feature subsequences of that length.  The feature input is a fusion of FMLLR (40-dim), i-Vector (100-dim), and logmel with its delta and double delta (40-dim $\times$3). This model contains over 43 million parameters and is about 165MB large. 

\label{sec:results}
\subsection{Convergence Results}
\begin{table*}[]\small
	\setlength{\belowcaptionskip}{-0.5cm}
	\centering
	\begin{tabular}{|c|c|c|c|c|c|c|c|c|}
		\hline
		\multirow{2}{*}{Model/Dataset} & \multicolumn{2}{c|}{Baseline} & \multicolumn{2}{c|}{Noise (Small)} & \multicolumn{2}{c|}{Noise (Medium)} & \multicolumn{2}{c|}{Noise (Large)} \\ \cline{2-9} 
		& SYNC          & ADPSGD        & SYNC            & A(DP)$^2$*          & SYNC         & A(DP)$^2$   & SYNC          & A(DP)$^2$ \\ \hline
		\eff                 & 90.41        & 91.21        & 90.26          & 89.90          & 88.13           & 87.01          & 82.99          & 82.47          \\ \hline
		\resnext                        & 92.82        & 94.17        & 91.63          & 91.52          & 89.30           & 88.61          & 84.35          & 82.20          \\ \hline
		\mobilenet                      & 91.87        & 92.80        & 90.58          & 90.59          & 88.92           & 88.13          & 84.16          & 83.11          \\ \hline
		\mobilenetvtwo                    & 94.29        & 94.14        & 92.61          & 92.13          & 90.93           & 90.45          & 86.52          & 84.83          \\ \hline
		\vgg                            & 92.95        & 92.73        & 91.21          & 91.03          & 88.27           & 87.89          & 82.80          & 81.78          \\ \hline
		\resnet                         & 94.01        & 94.97        & 91.67          & 91.64          & 89.08           & 88.43          & 83.40          & 81.01          \\ \hline
		\shuf                     & 92.74        & 92.67        & 91.23          & 90.78          & 89.39           & 88.71          & 85.08          & 82.67          \\ \hline
		\googlenet                      & 94.04        & 94.65        & 91.94          & 92.26          & 90.28           & 90.05          & 86.34          & 85.51          \\ \hline
		\senet                        & 94.19        & 94.68        & 91.99          & 91.92          & 89.99           & 88.99          & 10.00          & 82.90          \\ \hline
		LSTM                           & 1.566         & 1.585         & 1.617           & 1.627           & 1.752            & 1.732           & 1.990            & 2.010            \\ \hline
	\end{tabular}
	\caption{Convergence Comparison. \cifar model utility is measured in test accuracy. \swb model utility is measured in held-out loss. Noise level ($\sigma$) for \cifar is set as 1 (small),  2 (medium), 4 (large). Noise level ($\sigma$) for SWB is set as 0.08 (small), 0.16 (medium), 0.32 (large). *A(DP)$^2$ stands for \adpdpsgd.}
	\label{tab:conv}
\end{table*}

We train each \cifar model with a batch size of 256 per GPU (total batch size 4096 across 16 GPUs) and adopt this learning rate setup: 0.4 for the first 160 epochs, 0.04 between epoch 160 and epoch 240, and 0.004 for the remaining 60 epochs. For the \swb model, we adopt the same hyper-parameter setup as in~\cite{icassp19}: we train the model with a batch size of 128 per GPU (total batch size 2048 across 16 GPUs), the learning rate linearly warmup w.r.t each epoch from 0.1 to 1 for the first 10 epochs and then anneals by a factor of $\sqrt{2}/{2}$ each epoch for the remaining 10 epochs. The Baseline column in \Cref{tab:conv} records the utility (test accuracy for \cifar and held-out loss for \swb)  when no noise is injected for \sync and \adpsgd.

The remaining columns in \Cref{tab:conv} summarize the convergence comparison between \sync and \adpsgd under various levels of noise. Baseline \adpsgd can outperform \sync due to \adpsgd's intrinsic noise can lead model training to a better generalization\cite{interspeech19} when batchsize is large. 
\Cref{fig:cifar-convergence} visualizes the convergence comparison of three models between \sync and \adpsgd on \cifar. More results are provided in Appendix~\ref{sec:add_exp}. 

\noindent \underline{Summary} \adpsgd and \sync achieve the comparable level of utility under the same level of noise injection, thus ensuring the same level of differential privacy budget. 

\subsection{Deployment in the Wild}
Federated learning is most often deployed in a heterogeneous environment where different learners are widespread across different type of network links and run on different types of computing devices. We compare \sync and \adpsgd for various noise levels (i.e. differential privacy budgets) in 3 case studies. More results are provided in Appendix~\ref{sec:add_exp}.

\paragraph{Case I: Random learner slowdown}
In this scenario, during iteration there is a random learner that runs 2X slower than normal. This could happen when some learner randomly encounters a system hiccup (e.g., cache misses). In \sync, every learner must wait for the slowest one thus the whole system slows down by a factor of 2. In contrast, \adpsgd naturally balances the workload and remains largely undisturbed.
\Cref{fig:cifar-2x-slow} illustrates the convergence w.r.t runtime comparison between \sync and \adpsgd on \cifar when a random learner is slowed down by 2X in each iteration for medium-level noise (for the sake of brevity, we omit the comparison for other levels of noise, as they exhibit similar behaviors).

\begin{figure}
	\centering
	{\includegraphics[width=1\columnwidth]{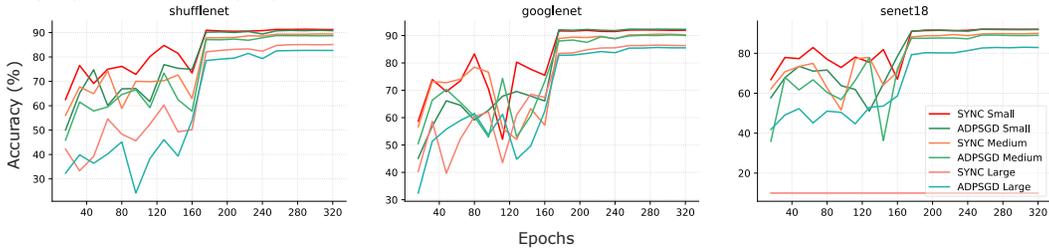}
	}
	\caption{\cifar convergence comparison between \sync and \adpsgd under various levels of noise injection (i.e., differential privacy budget). \sync and \adpsgd achieve similar level of utilities (i.e. test accuracy). }
	\label{fig:cifar-convergence}
\end{figure}

\paragraph{Case II: One very slow learner}
In this scenario, one learner is 10X slower than all the other learners. This could happen when one learner runs on an outdated device or the network links that go into the learners are low-speed compared to others. Similar to Case I, in \sync, all the learners wait for the slowest one and in \adpsgd, the workload is naturally re-balanced. \Cref{fig:cifar-10x-slow1} illustrates the convergence w.r.t runtime comparison of three models between \sync and \adpsgd on \cifar when 1 learner is slowed down by 10X in each iteration for medium-level noise.

\paragraph{Case III: Training with Large Batch}
To reduce communication cost, practitioners usually prefer to train with a larger batch size. When training with a larger batch-size, one also needs to scale up the learning rate to ensure a faster convergence~\cite{interspeech19, facebook-1hr,zhang2015staleness}. It was first reported in~\cite{interspeech19} that \sync training could collapse for \swb task when batch size is large and learning rate is high whereas \adpsgd manages to converge. We found some models (e.g., EfficientNet-B0) for computer vision task also exhibit the same trend. We increase batch size per GPU by a factor of two for \cifar and \swb, scale up the corresponding learning rate by 2 and introduce small level of noise, \sync collapses whereas \adpsgd still converges.
Figure \ref{fig:2x-bs} shows when batch size is 2X large, \sync training collapses for \cifar (EfficientNet-B0 model) and \swb , but \adpsgd manages to converge for both models.

\begin{figure}
	\setlength{\belowcaptionskip}{-0.3cm}
	\centering
	{\includegraphics[width=1\columnwidth]{./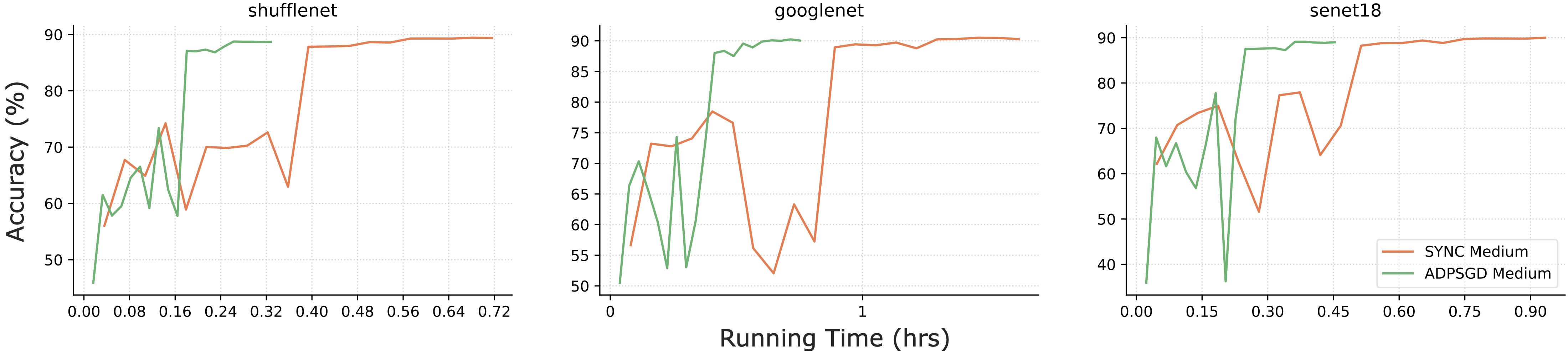}
	}
	\caption{\cifar convergence when a random learner is slowed down by 2X in each iteration with medium level of noise injection.}
	\label{fig:cifar-2x-slow}
\end{figure}

\begin{figure}
	\setlength{\belowcaptionskip}{-0.5cm}
	{\includegraphics[width=1\columnwidth]{./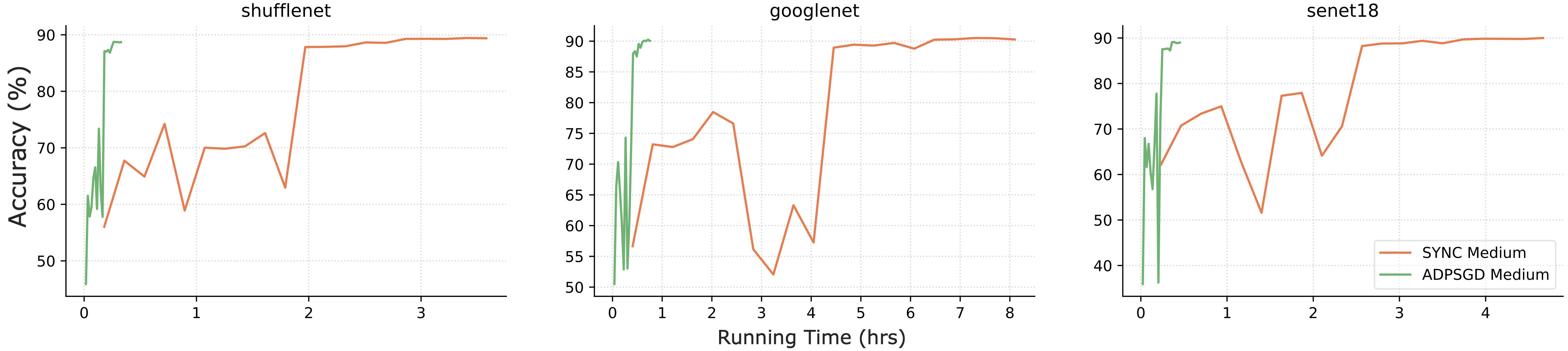}
	}
	\caption{\cifar convergence when one learner is slowed down by 10X in each iteration with medium level of noise injection. 
		\adpsgd runs significantly faster than \sync due to its asynchronous nature. }
	\label{fig:cifar-10x-slow1}
\end{figure}



\vspace{-0.2in}
\begin{figure}[htpb]
	\centering
	\subfloat[{\cifar}]
	{\includegraphics[width=0.3\columnwidth]{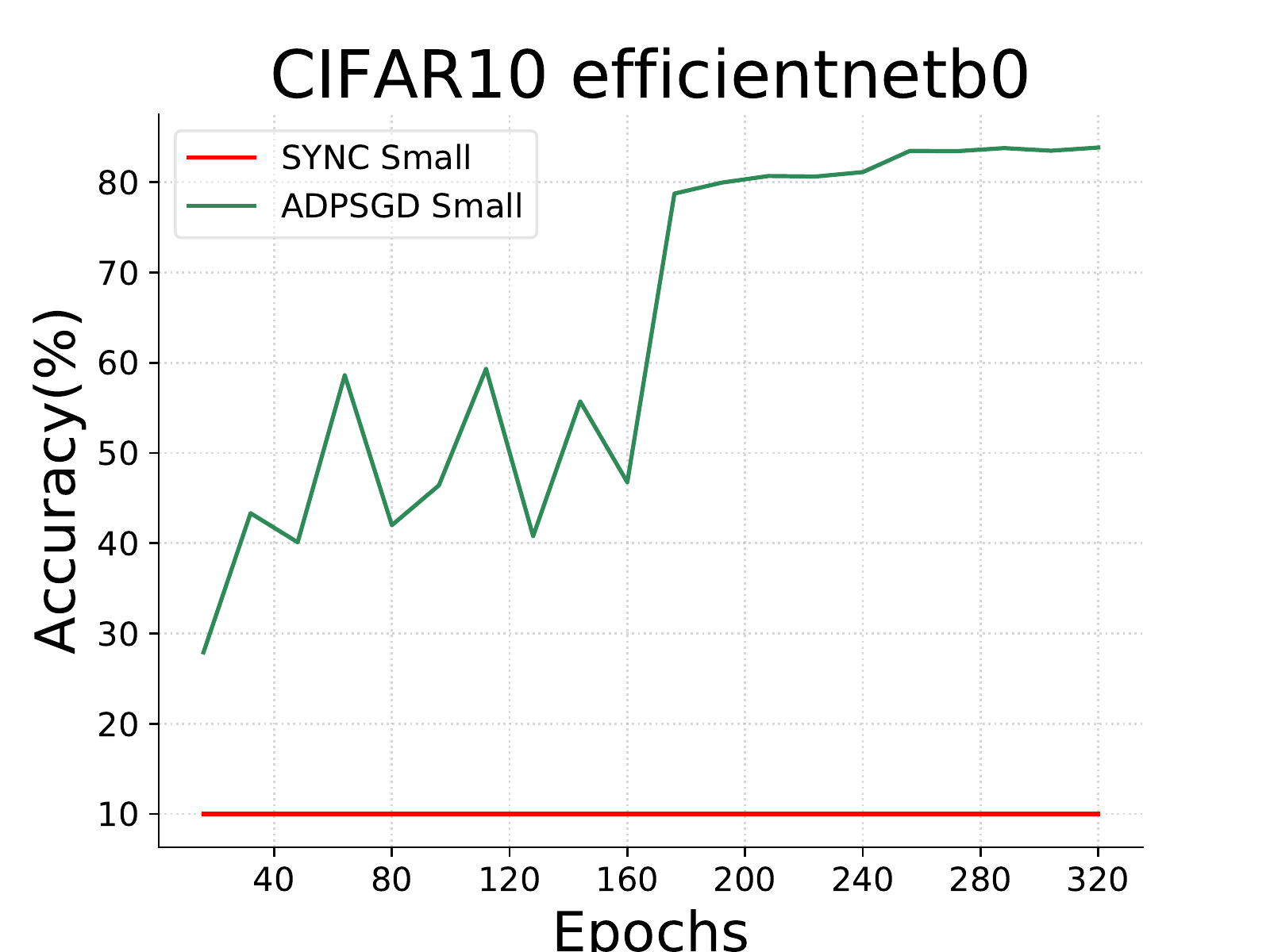}
		\label{fig:cifar-2x-bs}
	}
	\hspace{0.2in}
	\subfloat[{\swb}]
	{\includegraphics[width=0.3\columnwidth]{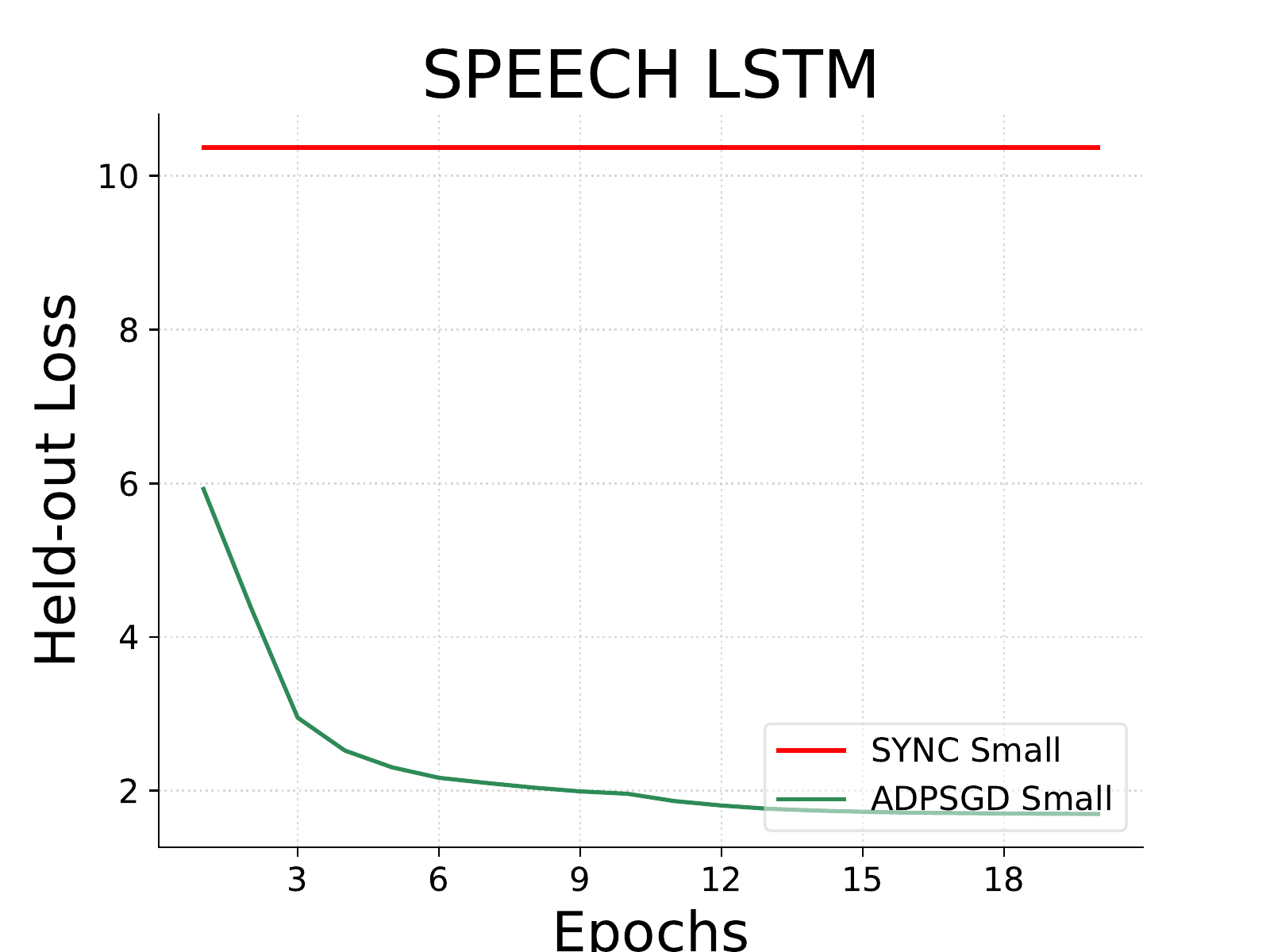}
		\label{fig:swb-2x-bs}
	}
	\caption{When batch size is 2X larger and learning rate is 2X larger, \adpsgd converges whereas \sync does not, with small level of noise injection. }
	\label{fig:2x-bs}
\end{figure}

%

\noindent \underline{Summary} In the heterogeneous environment where federated learning often operates in, \adpsgd achieves much faster convergence than \sync and \adpsgd can perform well in the case when \sync does not even converge. Since SYNC often yields the most accurate model among all DDL algorithms, we can safely conclude \adpsgd also achieves the best possible model accuracy in a DP setting, at a much higher speed.

\section{Conclusion}
This paper presents a differentially private version of asynchronous decentralized parallel SGD, maintaining communication efficiency and preventing inference from malicious participants at the same time. We theoretically analyze the impact of  DP mechanism on the convergence of \adpsgd. Our analysis shows \adpdpsgd also converges at the optimal $\mathcal{O}(1/\sqrt{T})$ rate as SGD. Besides, the privacy and utility guarantees are provided where R{\'e}nyi differential privacy is introduced to provide tighter privacy analysis for the composite Gaussian mechanism. Finally, we evaluate \adpdpsgd on both computer vision and speech recognition tasks, and the results demonstrate that it can achieve comparable utility than differentially private version of synchronous SGD but much faster than its synchronous counterparts in heterogeneous computing environments. Future work will focus on adaptive noise level associated with gradient clipping under asynchronous decentralized setting.


\section*{Broader Impact Statement}
This paper studies the problem of developing differentially private asynchronous decentralized parallel stochastic gradient descent (ADPSGD). Privacy and security are essential problems in real world applications involving sensitive information such as healthcare and criminal justice. The decentralized optimization setting can greatly enhance the security of the model training process and protect the data privacy at different locations. However, the parameter sharing and transmission process can still leak sensitive information. The DP mechanism further protects this potentially vulnerable step and makes the entire process more secure. The proposed A(DP)$^2$SGD mechanism can be broadly applied in the training process of a variety set of machine learning models, and also enhance their trustworthiness in applications involving sensitive information.



\medskip

{
\small
\bibliographystyle{plain}
\bibliography{ref}
}

\clearpage 
\appendix
\section{System Design and Implementation}
\label{sec:design}
Most (if not all) of the state-of-the-art DL models are trained in the Synchronous SGD fashion, as it is considered to have the most stable convergence behavior ~\cite{facebook-1hr, revisit-sync-sgd,zhang2016icdm} and often yields the best model accuracy. No existing FL training method has been proven to outperform \sync for the final model accuracy. 
We implement Synchronous SGD (\sync) as the "golden" baseline to examine if \adpdpsgd can achieve the best possbile model accuracy while maintaining DP protection. In \sync, we place an ``allreduce'' (sum) call after each learner's weight update in each iteration and then take the average of the weights across all the learners. An allreduce call is a reduction operation followed by a broadcast operation. A reduction operation is both commutative and associative (e.g., summation). The allreduce mechanism we chose is Nvidia NCCL ~\cite{nccl}, which is the state-of-the-art allreduce implementation on a GPU cluster.

To implement \adpsgd, we place all the learners (\emph{i.e.}, GPUs) on a communication ring. To avoid deadlock, we partition the communication ring into nonintersecting sets of senders and receivers and require that communication edges start from the sender sub-graph and end in the receiver sub-graph. To achieve better convergence behavior, we also adopt the communication randomization technique as proposed in ~\cite{icassp20}, in which sender randomly picks a receiver in each iteration. The paired-up sender and receiver exchange weights and update their weights as the average of the two. A global counter is maintained to record how many minibatches all the learners in the system collectively have processed. The training finishes when the global counter reaches the termination threshold. We implemented \adpsgd in C++ and MPI. 

In both \sync and \adpsgd, we insert random noise into gradients in each iteration to enable differential privacy protection.


\section{Additional Experiment Results}
\label{sec:add_exp}
\subsection{Software and Hardware}
We use PyTorch 1.1.0 as the underlying deep learning framework. We use the CUDA 10.1 compiler, the CUDA-aware OpenMPI 3.1.1, and g++ 4.8.5 compiler to build our communication library, which connects with PyTorch via a Python-C interface. We run our experiments on a 16-GPU 2-server cluster. Each server has 2 sockets and 9 cores per socket. Each core is an Intel Xeon E5-2697 2.3GHz processor. Each server is equipped with 1TB main memory and 8 V100 GPUs. Between servers are 100Gbit/s Ethernet connections. GPUs and CPUs are connected via PCIe Gen3 bus, which has a 16GB/s peak bandwidth in each direction per CPU socket.

\subsection{Dataset and Model}
We evaluate on two deep learning tasks: computer vision and speech recognition. For computer vision task, we evaluate on \cifar dataset~\cite{krizhevsky2009learning}, which comprises of a total of 60,000 RGB images of size 32 $\times$ 32  pixels partitioned into the training set (50,000 images) and the test set (10,000 images). We test \cifar with 9 representative convolutional neural network (CNN) models~\cite{pytorch-cifar}: (1) \shuf, a 50 layer instantiation of ShuffleNet architecture \cite{shufflenet}. (2) \mobilenetvtwo, a 19 layer instantiation of \cite{mobilenetv2} architecture that improves over MobileNet by introducing linear bottlenecks and inverted residual block.(3) \eff , with a compound coefficient 0 in the basic EfficientNet architecture~\cite{efficientnet}. (4) \mobilenet, a 28 layer instantiation of MobileNet architecture \cite{mobilenet}. (5) \googlenet, a 22 layer instantiation of Inception architecture \cite{googlenet}. (6) \resnext, a 29 layer instantiation of \cite{resnext} with bottlenecks width 64 and 2 sets of aggregated transformations. (7) \resnet, a 18 layer instantiation of ResNet architecture \cite{resnet}.(8) \senet, which stacks Squeeze-and-Excitation blocks~\cite{senet} on top of a ResNet-18 model. (9) \vgg, a 19 layer instantiation of VGG architecture \cite{vgg}.      The detailed model implementation refers to \cite{pytorch-cifar}.

Among these models, ShuffleNet, MobileNet(V2), EfficientNet represent the low memory footprint models that are widely used on mobile devices, where federated learnings is often used. The other models are standard CNN models that aim for high accuracy.

For speech recognition task, we evaluate on \swb dataset. The training dataset is 30GB large and contains roughly 4.2 million samples. The test dataset is 564MB large and contains roughly 80,000 samples. The acoustic model is a long short-term memory (LSTM) model with 6 bi-directional layers. Each layer contains 1,024 cells (512 cells in each direction). On top of the LSTM layers, there is a linear projection layer with 256 hidden units, followed by a softmax output layer with 32,000 (i.e. 32,000 classes) units corresponding to context-dependent HMM states. The LSTM is unrolled with 21 frames and trained with non-overlapping feature subsequences of that length.  The feature input is a fusion of FMLLR (40-dim), i-Vector (100-dim), and logmel with its delta and double delta (40-dim $\times$3). This model contains over 43 million parameters and is about 165MB large. 

\Cref{tab:model_size_training_time} summarizes the model size and training time for both tasks. Training time is measured on running single-GPU and accounts the time for collecting training statistics (e.g., L2-Norm) and noise injection operations.

\begin{table*}[htbp]
	\centering
	\begin{tabular}{|c|c|c|}
		\hline
		Model/Dataset               & Model Size (MB) & Training Time (hr) \\ \hline
		\shuf/C*     & 4.82            & 4.16                    \\ \hline
		\mobilenetvtwo/C    & 8.76            & 4.63                    \\ \hline
		\eff/C & 11.11           & 5.53                    \\ \hline
		\mobilenet/C      & 12.27           & 3.55                    \\ \hline
		\googlenet/C      & 23.53           & 8.43                    \\ \hline
		\resnext/C        & 34.82           & 7.08                    \\ \hline
		\resnet/C         & 42.63           & 5.56                    \\
		\hline
		\senet/C        & 42.95           & 5.80                    \\
		\hline
		\vgg/C            & 76.45           & 7.42                    \\ \hline
		LSTM/S**           & 164.62          & 102.41                  \\ \hline
	\end{tabular}
	\caption{Model size and training time. Training time is measured on running on 1 V100 GPU, which accounts additional operations for collecting training statistics (e.g., L2-Norm) and noise injection operations. *C stands for \cifar, **S stands for \swb.}
	\label{tab:model_size_training_time}
\end{table*}

\begin{figure}
	\setlength{\belowcaptionskip}{-0.5cm}
	\centering
	{\includegraphics[width=1\columnwidth]{./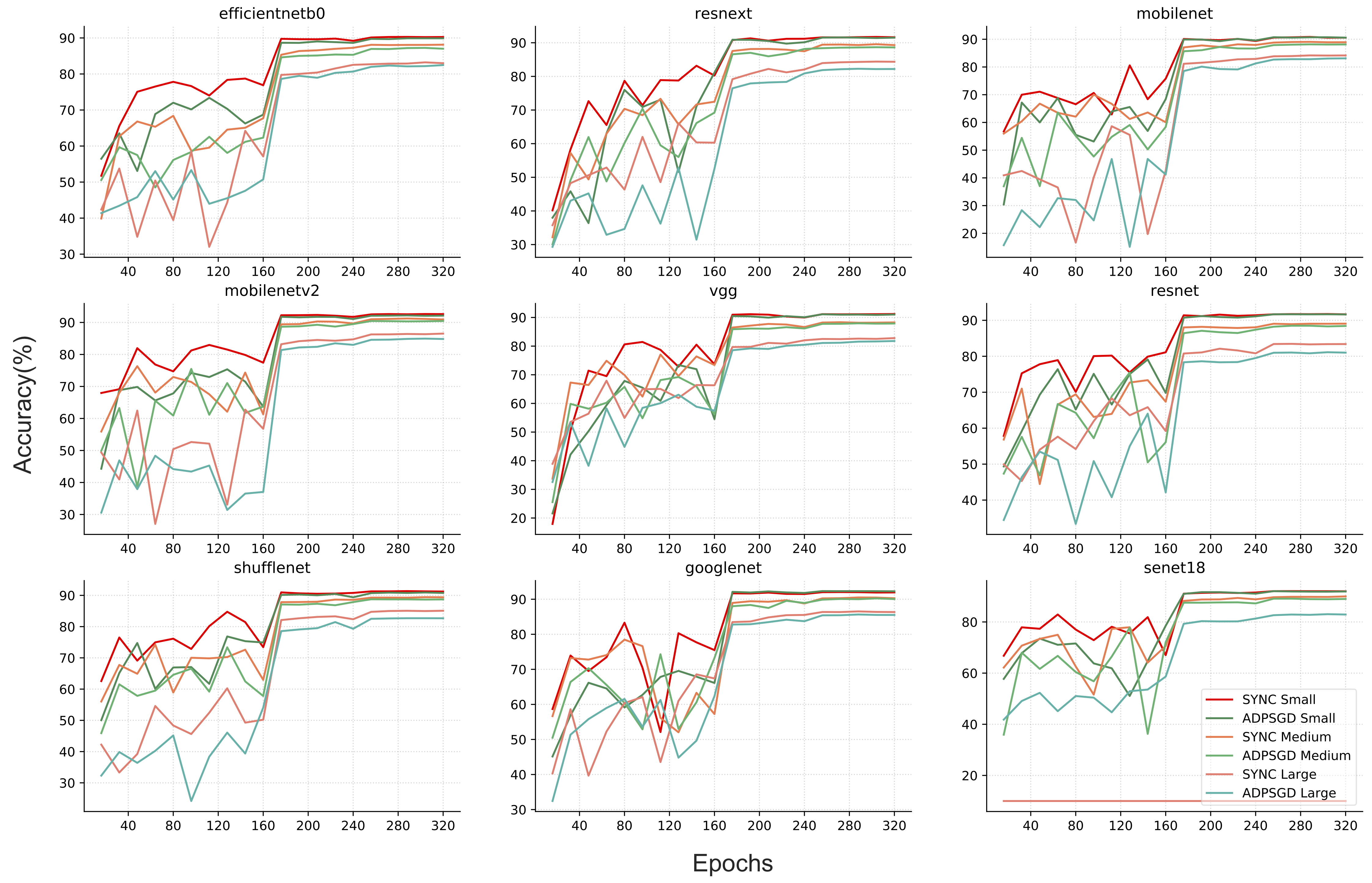}
	}
	\caption{\cifar convergence comparison between \sync and \adpsgd under various levels of noise injection (i.e., differential privacy budget). \sync and \adpsgd achieve similar level of utilities (i.e. test accuracy). }
\end{figure}

\begin{figure}
	{\includegraphics[width=1\columnwidth]{./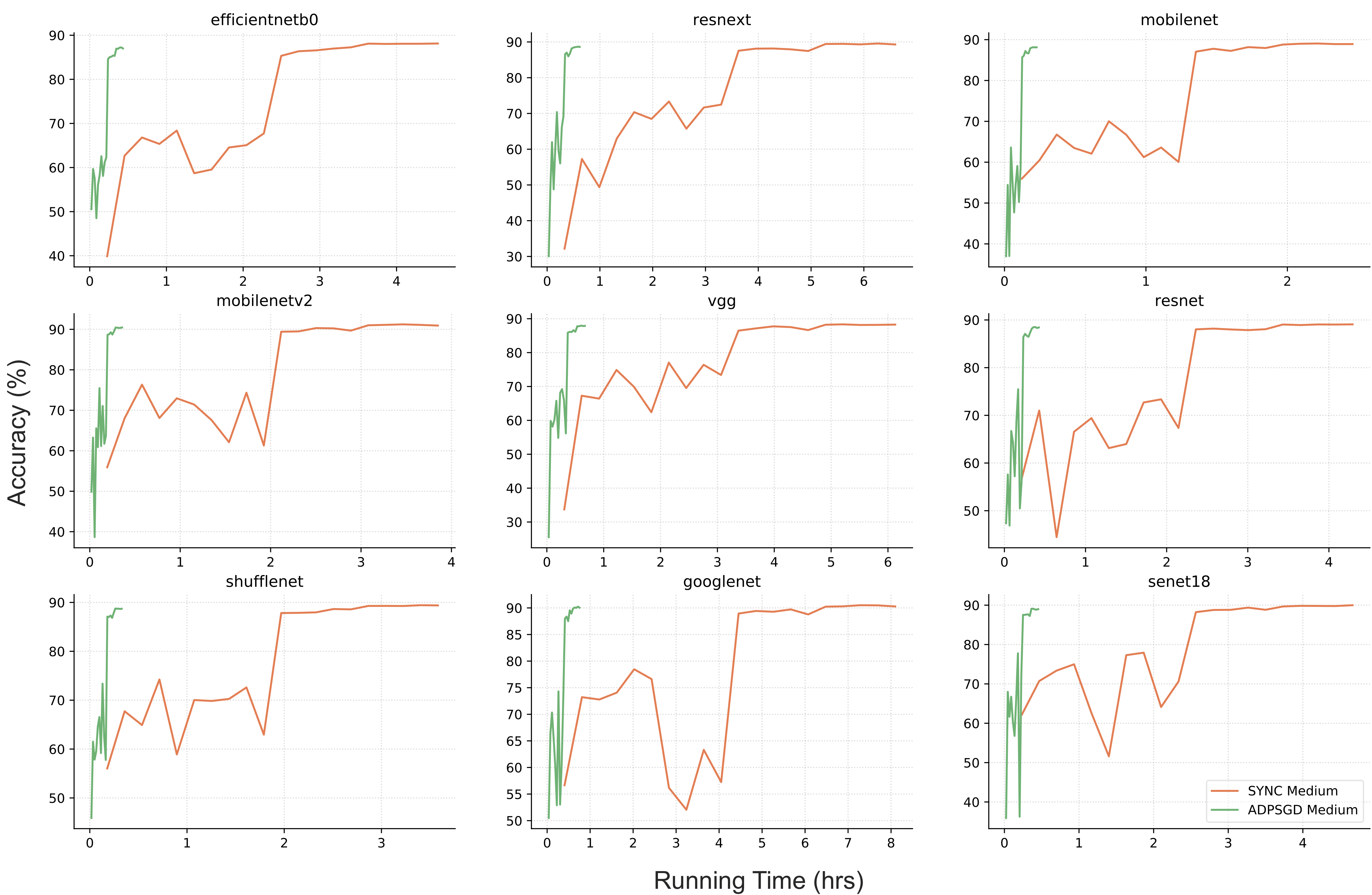}
	}
	\caption{\cifar convergence when one learner is slowed down by 10X in each iteration with medium level of noise injection. 
		\adpsgd runs significantly faster than \sync due to its asynchronous nature. }
	\label{fig:cifar-10x-slow}
\end{figure}

\begin{figure}[h]
	\setlength{\belowcaptionskip}{-0.5cm}
	{\includegraphics[width=1\columnwidth]{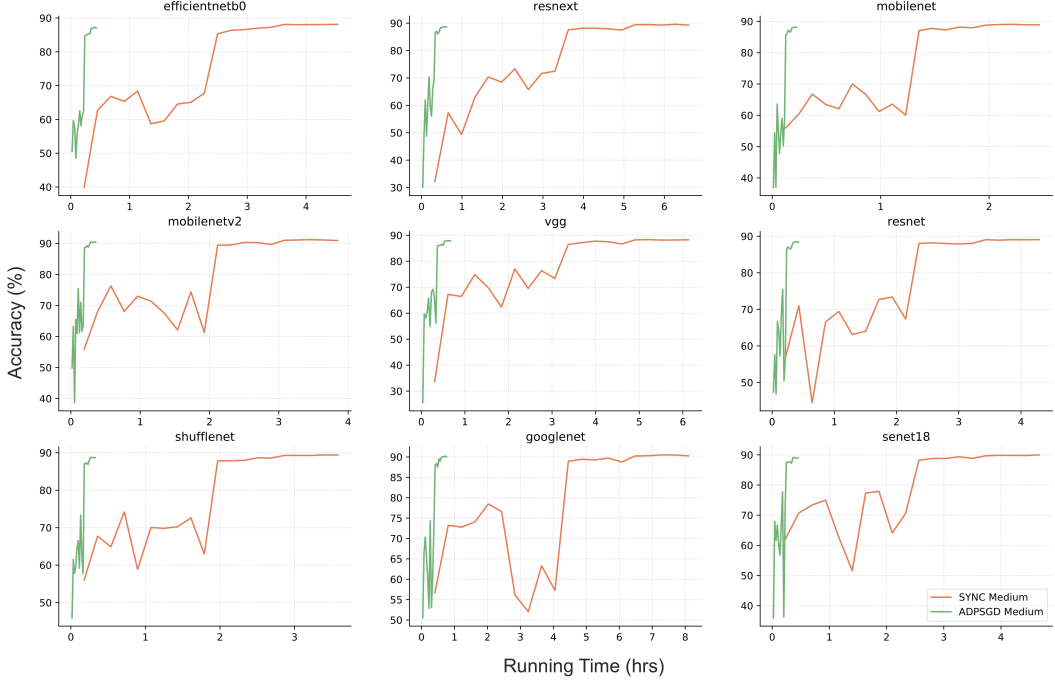}
	}
	\caption{\cifar convergence when one learner is slowed down by 10X in each iteration with medium level of noise injection (we omit displaying other levels of noise for the sake of brevity). \adpsgd runs significantly faster than \sync due to its asynchronous nature. }
	\label{fig:cifar-10x-slow}
\end{figure}
\Cref{fig:cifar-10x-slow} depicts the convergence comparison between \sync and \adpsgd on \cifar when 1 learner is slowed down by 10X.

\Cref{fig:swb-convergence-slow} depicts the convergence comparison between \sync and \adpsgd on \swb task for Case I and case II.

\begin{figure}
	\centering
	\begin{minipage}{0.49\textwidth}
		\centering
		\includegraphics[width=1\textwidth]{./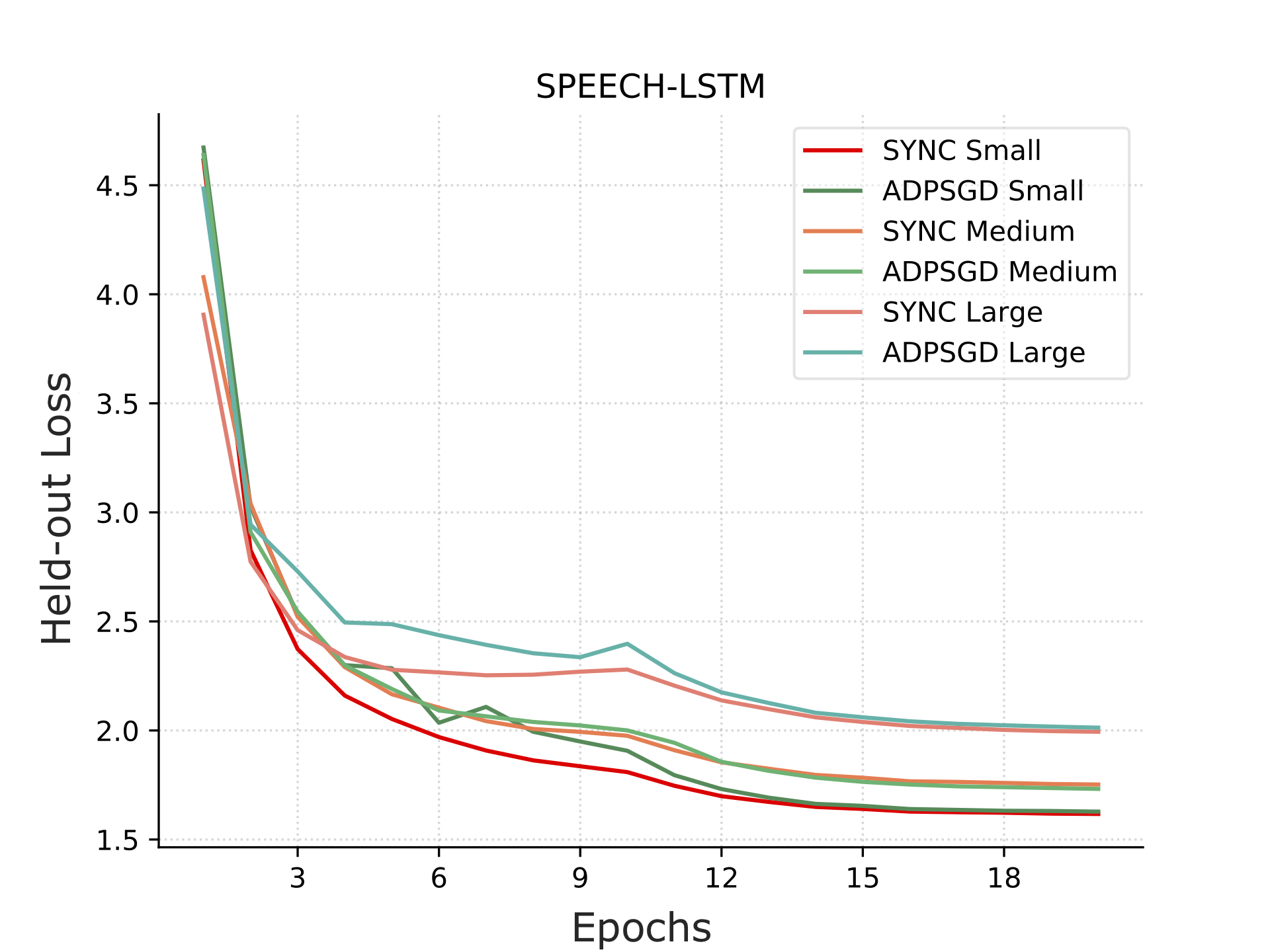} 
		\caption{\swb convergence comparison between \sync and \adpsgd under various levels of noise injection (i.e., differential privacy budget). \sync and \adpsgd achieve similar level of utilities (i.e. held-out loss). }
		\label{fig:swb-convergence}
	\end{minipage}\hfill
	\begin{minipage}{0.49\textwidth}
		\centering
		\includegraphics[width=1\textwidth]{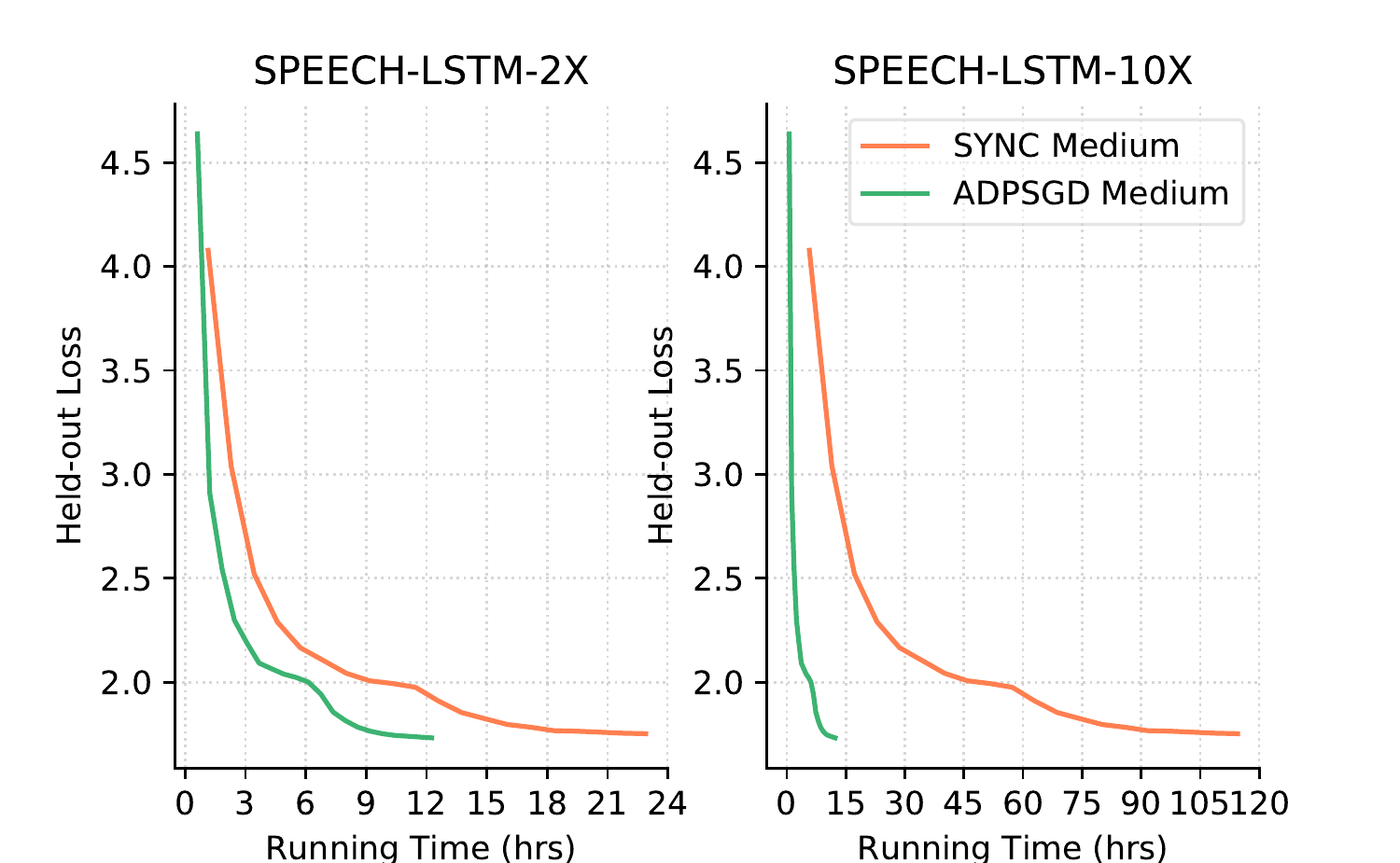} 
		\caption{\swb convergence when one random learner is slowed down by 2X (left) and one learner is slowed down by 10X (right), with medium level noise injection (we omit displaying other levels of noise for the sake of brevity). \adpsgd runs significantly faster than \sync due to its asynchronous nature.}
		\label{fig:swb-convergence-slow}
	\end{minipage}
\end{figure}
\section{Proofs of Theorem~\ref{the:utility}}
\begin{proof}
	Recall that we have the following update rule:
	\begin{align}
		&[\mathbf{w}_1^{t+1/2},\mathbf{w}_2^{t+1/2},...,\mathbf{w}_K^{t+1/2}]\leftarrow[\mathbf{w}_1^t,\mathbf{w}_2^t,...,\mathbf{w}_K^t]\mathbf{A}^t;\notag\\
		&\mathbf{w}_{k^t}^{t+1} \leftarrow \mathbf{w}_{k^t}^{t+1/2}-\eta \left({g}^t(\hat{\mathbf{w}}_{k^t}^t;\xi_{k^t}^t)+\mathbf{n}\right),
	\end{align}
	where $k^t$ are drawn from $[K]$, and $\mathbf{n}\sim \mathcal{N}(0,\sigma^2\mathbf{I})$. Let's denote $\theta^{t+1}$ as the average of all local models at $\{t+1\}$-th iteration, \emph{i.e.}, $\theta^{t+1}=\frac{1}{K}\sum_{k=1}^K \mathbf{w}_k^{t+1}$. Since $F$ is $L$-smooth, we have
	\begin{equation*}
		\begin{aligned}
			F(\theta^{t+1})
			=&F\left(\frac{\mathbf{W}^{t+1}\mathbf{1}_K}{K}\right)
			=F\left(\frac{\mathbf{W}^{t}\mathbf{A}^t\mathbf{1}_K}{K}-\frac{\eta}{K} \left( g_c(\hat{\mathbf{w}}_{k^t}^t,\xi_{k^t}^t)+\mathbf{n}\right)\right)\\
			=&F\left(\frac{\mathbf{W}^{t}\mathbf{1}_K}{K}-\frac{\eta}{K} \left( g_c(\hat{\mathbf{w}}_{k^t}^t,\xi_{k^t}^t)+\mathbf{n}\right)\right)
			=F\left(\theta^{t}-\frac{\eta}{K} \left( g_c(\hat{\mathbf{w}}_{k^t}^t,\xi_{k^t}^t)+\mathbf{n}\right)\right)\\
			\leq&F\left(\theta^{t}\right)-\frac{\eta}{K}\left\langle \triangledown F(\theta^t), \left( g_c(\hat{\mathbf{w}}_{k^t}^t,\xi_{k^t}^t)+\mathbf{n}\right)\right\rangle\\
			&\ \ \ +\frac{\eta^2L}{2K^2}\left(\left\|g_c(\hat{\mathbf{w}}_{k^t}^t,\xi_{k^t}^t)\right\|^2+\|\mathbf{n}\|^2+2\left\langle g_c(\hat{\mathbf{w}}_{k^t}^t,\xi_{k^t}^t),\mathbf{n} \right\rangle\right).\\
		\end{aligned}
	\end{equation*}
	
	Tanking expectation with respect to $k^t$ and $\mathbf{n}$ given $\theta^t$, we have
	\begin{align}\label{eq:exp}
		&\mathbb{E}F(\theta^{t+1})-\mathbb{E}F(\theta^t)\notag\\
		\leq&-\frac{\eta}{K}\mathbb{E}\left\langle \triangledown F(\theta^t), g(\hat{\mathbf{w}}_{k^t}^t,\xi_{k^t}^t)\right\rangle
		+\frac{\eta^2L}{2K^2}\left(\mathbb{E}\left\|g(\hat{\mathbf{w}}_{k^t}^t,\xi_{k^t}^t)\right\|^2+\mathbb{E}\|\mathbf{n}\|^2\right)\notag\\
		\leq&-\frac{\eta B}{K}\mathbb{E}\left\langle \triangledown F(\theta^t),  \sum_{k=1}^{K}p_k\triangledown F_k(\hat{\mathbf{w}}_k^t)\right\rangle
		+\frac{\eta^2L}{2K^2}\mathbb{E}\|g(\hat{\mathbf{w}}_{k^t}^t;\xi_{k^t}^t)\|^2+\frac{\eta^2Ld\sigma^2}{2K^2}\notag\\
		=&\frac{\eta B}{2K}\underbrace{\mathbb{E}\left\| \triangledown F(\theta^t)- \sum_{k=1}^Kp_k\triangledown F_k(\hat{\mathbf{w}}_k^t)\right\|^2}_{T_1} -\frac{\eta B}{2K}\mathbb{E}\left\|\sum_{k=1}^Kp_k\triangledown F_k(\hat{\mathbf{w}}_k^t)\right\|^2\notag\\
		&-\frac{\eta B}{2K}\mathbb{E}\left\| \triangledown F(\theta^t)\right\|^2+\frac{\eta^2L}{2K^2}\underbrace{\mathbb{E}\left\|g(\hat{\mathbf{w}}_{k^t}^t,\xi_{k^t}^t)\right\|^2}_{T_2} + \frac{\eta^2Ld\sigma^2}{2K^2}.
	\end{align}
	
	For $T_1$ we have
	\begin{align}\label{eq:T1}
		T_1=&\mathbb{E}\left\| \triangledown F(\theta^t)- \sum_{k=1}^Kp_k \triangledown F_k(\hat{\mathbf{w}}_k^t)\right\|^2\notag\\
		\leq&2\mathbb{E}\left\| \triangledown F(\theta^t)- \triangledown F(\hat{\theta}^t)\right\|^2+2\mathbb{E}\left\| \triangledown F(\hat{\theta}^t)- \sum_{k=1}^Kp_k \triangledown F_k(\hat{\mathbf{w}}_k^t)\right\|^2\notag\\
		=&2\mathbb{E}\left\| \triangledown F(\theta^t)- \triangledown F(\hat{\theta}^t)\right\|^2+2\mathbb{E}\left\|\sum_{k=1}^Kp_k\left( \triangledown F_k(\hat{\theta}^t)-  \triangledown F_k(\hat{\mathbf{w}}_k^t)\right)\right\|^2\notag\\
		\leq&2\mathbb{E}\left\| \triangledown F(\theta^t)- \triangledown F(\hat{\theta}^t)\right\|^2+2\mathbb{E}\sum_{k=1}^Kp_k\left\| \triangledown F_k(\hat{\theta}^t)-  \triangledown F_k(\hat{\mathbf{w}}_k^t)\right\|^2\notag\\
		\leq& 2L^2\mathbb{E}\left\|\theta^t-\hat{\theta}^t\right\|^2+2L^2\mathbb{E}\sum_{k=1}^Kp_k\left\|\hat{\theta}^t
		-\hat{\mathbf{w}}_k^t\right\|^2\notag\\
		=&2L^2\mathbb{E}\left\|\sum_{t'=1}^{\tau_t}\frac{\eta}{K} g(\hat{\mathbf{w}}_{k^{t-t'}}^{t-t'},\xi_{k^{t-t'}}^{t-t'})\right\|^2+2L^2\mathbb{E}\sum_{k=1}^Kp_k\left\|\hat{\theta}^t   -\hat{\mathbf{w}}_k^t\right\|^2\notag\\
		\leq&\frac{2\eta^2L^2\tau_t}{K^2}\sum_{t'=1}^{\tau_t}\mathbb{E}\left\|g(\hat{\mathbf{w}}_{k^{t-t'}}^{t-t'},\xi_{k^{t-t'}}^{t-t'})\right\|^2+2L^2\mathbb{E}\sum_{k=1}^Kp_k\left\|\hat{\theta}^t   -\hat{\mathbf{w}}_k^t\right\|^2.
	\end{align}
	
	For $T_2$, we have
	\begin{align}\label{eq:T2}
		T_2 = &\mathbb{E}\left\|g(\hat{\mathbf{w}}_{k^t}^t,\xi_{k^t}^t)\right\|^2 =
		\sum_{k=1}^Kp_k\mathbb{E}\left\|\sum_{j=1}^{B}\triangledown f_j(\hat{\mathbf{w}}_k^t,\xi_{k}^{t,j})\right\|^2\notag\\
		=&\sum_{k=1}^Kp_k\mathbb{E}\left\|\sum_{j=1}^B\left(\triangledown f(\hat{\mathbf{w}}_k^t,\xi_k^{t,j})-\triangledown F_k(\hat{\mathbf{w}}_k^t)\right)\right\|^2 + \sum_{k=1}^Kp_k\mathbb{E}\left\|B\triangledown F_k(\hat{\mathbf{w}}_k^t)\right\|^2\notag\\
		\leq&B\varsigma^2+B^2\sum_{k=1}^Kp_k\mathbb{E}\left\|\triangledown F_k(\hat{\mathbf{w}}_k^t)\right\|^2.
	\end{align}
	
	According to Lemma 5 in~\cite{lian2018asynchronous}, we have
	\begin{equation}\label{eq:lem5}
	\sum_{k=1}^Kp_k\mathbb{E}\left\|\triangledown F_k(\hat{\mathbf{w}}_k^t)\right\|^2
	\leq 12L^2\mathbb{E}\sum_{k=1}^Kp_k^2\left\|\hat{\theta}^t   -\hat{\mathbf{w}}_k^t\right\|^2+6\upsilon^2+2\mathbb{E}\left\|\sum_{k=1}^Kp_k\triangledown F_k(\hat{\mathbf{w}}_k^t)\right\|^2.\notag
	\end{equation}
	
	Plugging (\ref{eq:T1}), (\ref{eq:T2}) and (\ref{eq:lem5}) into (\ref{eq:exp}), we can obtain
	\begin{align}
		&\mathbb{E}F(\theta^{t+1})-\mathbb{E}F(\theta^t)\notag\\
		\stackrel{(\ref{eq:T1})}\leq&\frac{\eta B}{K}\left(\frac{\eta^2L^2\tau_t}{K^2}\sum_{t'=1}^{\tau_t}\mathbb{E}\left\|g(\hat{\mathbf{w}}_{k^{t-t'}}^{t-t'},\xi_{k^{t-t'}}^{t-t'})\right\|^2+L^2\mathbb{E}\sum_{k=1}^Kp_k\left\|\hat{\theta}^t   -\hat{\mathbf{w}}_k^t\right\|^2\right)\notag\\
		& -\frac{\eta B}{2K}\mathbb{E}\left\|\sum_{k=1}^Kp_k \triangledown F_k(\hat{\mathbf{w}}_k^t)\right\|^2-\frac{\eta B}{2K}\mathbb{E}\left\| \triangledown F(\theta^t)\right\|^2\notag\\
		&+\frac{\eta^2L}{2K^2}\mathbb{E}\left\|g(\hat{\mathbf{w}}_{k^t}^t,\xi_{k^t}^t)\right\|^2+\frac{\eta^2Ld\sigma^2}{2K^2}\notag\\
		\stackrel{(\ref{eq:T2})}\leq&\frac{\eta^3L^2B^3\tau_t}{K^3}\sum_{t'=1}^{\tau_t}\sum_{k=1}^Kp_k\mathbb{E}\left\|\triangledown F_k(\hat{\mathbf{w}}_k^{t-t'})\right\|^2 +
		\frac{\eta L^2B}{K}\mathbb{E}\sum_{k=1}^Kp_k\left\|\hat{\theta}^t   -\hat{\mathbf{w}}_k^t\right\|^2\notag\\
		&-\frac{\eta B}{2K}\mathbb{E}\left\|\sum_{k=1}^Kp_k \triangledown F_k(\hat{\mathbf{w}}_k^t)\right\|^2-\frac{\eta B}{2K}\mathbb{E}\left\| \triangledown F(\theta^t)\right\|^2
		+\frac{\eta^2LB^2}{2K^2}\sum_{k=1}^Kp_k\mathbb{E}\left\|\triangledown F_k(\hat{\mathbf{w}}_k^t)\right\|^2\notag\\
		&+\frac{\eta^2L(\varsigma^2B+d\sigma^2)}{2K^2}+\frac{\eta^3L^2B^2\varsigma^2\tau_t^2}{K^3}\notag\\\notag
		\stackrel{(\ref{eq:lem5})}\leq&\frac{2\eta^3L^2B^3\tau}{K^3}\sum_{t'=1}^{\tau_t}
		\left(6L^2\mathbb{E}\sum_{k=1}^Kp_k\left\|\hat{\theta}^{t-t'}   -\hat{\mathbf{w}}_k^{t-t'}\right\|^2+\mathbb{E}\left\|\sum_{k=1}^Kp_k\triangledown F_k(\hat{\mathbf{w}}_k^{t-t'})\right\|^2\right)\notag\\
		&-\frac{\eta B}{2K}\mathbb{E}\left\| \triangledown F(\theta^t)\right\|^2+\left(\frac{\eta L^2B}{K}+\frac{6\eta^2L^3B^2}{K^2}\right)\mathbb{E}\sum_{k=1}^Kp_k\left\|\hat{\theta}^t   -\hat{\mathbf{w}}_k^t\right\|^2
		\notag\\
		& -\left(\frac{\eta B}{2K}-\frac{\eta^2LB^2}{K^2}\right)\mathbb{E}\left\|\sum_{k=1}^Kp_k \triangledown F_k(\hat{\mathbf{w}}_k^t)\right\|^2\notag\\
		&+\frac{\eta^2L(\varsigma^2B+6\upsilon^2B^2+d\sigma^2)}{2K^2}+\frac{\eta^3L^2B^2\tau^2(\varsigma^2+6\upsilon^2B)}{K^3}\notag.
	\end{align}
	
	Summing over $t=0,...,T-1$ and $\forall \tau_t\leq \tau$, we have
	\begin{align}
		&\mathbb{E}F(\theta^{t+1})-\mathbb{E}F(\theta^t)\notag\\
		\leq&\frac{2\eta^3B^3L^2\tau^2}{K^3}\sum_{t=0}^{T-1}
		\left(6L^2\mathbb{E}\sum_{k=1}^Kp_k\left\|\hat{\theta}^{t}   -\hat{\mathbf{w}}_k^{t}\right\|^2+\mathbb{E}\left\|\sum_{k=1}^Kp_k\triangledown F_k(\hat{\mathbf{w}}_k^{t})\right\|^2\right)\notag\\
		& -\left(\frac{\eta B}{2K}-\frac{\eta^2B^2L}{K^2}\right)\sum_{t=0}^{T-1}\mathbb{E}\left\|\sum_{k=1}^Kp_k \triangledown F_k(\hat{\mathbf{w}}_k^t)\right\|^2-\frac{\eta B}{2K}\sum_{t=0}^{T-1}\mathbb{E}\left\| \triangledown F(\theta^t)\right\|^2\notag\\
		&+\left(\frac{\eta BL^2}{K} +\frac{6\eta^2B^2L^3}{K^2}\right)\sum_{t=0}^{T-1}\mathbb{E}\sum_{k=1}^Kp_k\left\|\hat{\theta}^t   -\hat{\mathbf{w}}_k^t\right\|^2
		\notag\\
		&+\frac{\eta^3BL^2\tau^2(\varsigma^2B+6\upsilon^2B^2+d\sigma^2)T}{K^3}+\frac{\eta^2L(\varsigma^2B+6\upsilon^2B^2+d\sigma^2)T}{2K^2}\notag\\
		=&\left(\frac{\eta BL^2}{K} +\frac{6\eta^2B^2L^3}{K^2}+\frac{12\eta^3B^3L^4\tau^2}{K^3}\right)\underbrace{\sum_{t=0}^{T-1}\mathbb{E}\sum_{k=1}^Kp_k\left\|\hat{\theta}^t   -\hat{\mathbf{w}}_k^t\right\|^2}_{T_3}
		\notag\\
		& -\left(\frac{\eta B}{2K}-\frac{\eta^2B^2L}{K^2}-\frac{2\eta^3B^3L^2\tau^2}{K^3}\right)\sum_{t=0}^{T-1}\mathbb{E}\left\|\sum_{k=1}^Kp_k \triangledown F_k(\hat{\mathbf{w}}_k^t)\right\|^2-\frac{\eta B}{2K}\sum_{t=0}^{T-1}\mathbb{E}\left\| \triangledown F(\theta^t)\right\|^2\notag\\
		&+\frac{\eta^3BL^2\tau^2(\varsigma^2B+6\upsilon^2B^2+d\sigma^2)T}{K^3}+\frac{\eta^2L(\varsigma^2B+6\upsilon^2B^2+d\sigma^2)T}{2K^2}
	\end{align}  
	
	We can use Lemma~\ref{lem:T3} to bound the term $T_3$ and we use similar notations as in~\cite{lian2018asynchronous} for simpler notation. By arranging the terms, we have
	\begin{align}
		\mathbb{E}F(\theta^T)
		\leq&\mathbb{E}F(\theta^0)-C_2\sum_{t=0}^{T-1}\mathbb{E}\left\|\sum_{k=1}^Kp_k \triangledown F_k(\hat{\mathbf{w}}_k^t)\right\|^2-\frac{\eta B}{2K}\sum_{t=0}^{T-1}\mathbb{E}\left\| \triangledown F(\theta^t)\right\|^2\notag\\
		&+C_3\frac{\eta^2LT}{K^2}(\varsigma^2B+6\upsilon^2B^2+d\sigma^2).
	\end{align}  
	
	Then, while $C_3\leq 1$ and $C_2\geq 0$ we complete the  proof of Theorem~\ref{the:utility}.
	\begin{align}
		\frac{\sum_{t=0}^{T-1}\mathbb{E}\left\| \triangledown F(\theta^t)\right\|^2}{T}
		\leq &\frac{2\left(\mathbb{E}F(\theta^0)-\mathbb{E}F(\theta^T)\right)}{\eta TB/K}+\frac{2\eta L}{BK}(\varsigma^2B+6\upsilon^2B^2+d\sigma^2)\notag\\
		\leq &\frac{2\left(\mathbb{E}F(\mathbf{w}^0)-\mathbb{E}F^{*}\right)}{\eta TB/K}+\frac{2\eta L}{BK}(\varsigma^2B+6\upsilon^2B^2+d\sigma^2).\notag
	\end{align} 
	We complete the proof.
\end{proof}

\section{Proofs of Lemma~\ref{lem:T3}}
\begin{lemma}\label{lem:T3}
	While $C_1>0$ and $\forall T\geq 1$, we have
	\begin{align}
		\frac{\sum_{t=0}^{T-1}\mathbb{E}\sum_{k=1}^Kp_k\left\|\hat{\theta}^t -\hat{\mathbf{w}}_k^t\right\|^2}{T}
		\leq&\frac{4\eta^2B^2}{TC_1}\left(\tau\frac{K-1}{K}+\bar{\rho}\right)\sum_{t=0}^{T-1}\mathbb{E}\left\|\sum_{k=1}^Kp_k\triangledown F_k(\hat{\mathbf{w}}_k^{t})\right\|^2\notag\\
		&\quad\quad\quad +\frac{2\eta^2(\varsigma^2B+6\upsilon^2B^2+d\sigma^2)\bar{\rho}}{C_1}
	\end{align}
\end{lemma}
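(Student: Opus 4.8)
The plan is to adapt the consensus-error bound of Lian \emph{et al.}~\cite{lian2018asynchronous} to the noisy update of Algorithm~\ref{alg:gaussian}. Collect the local models into $\mathbf{W}^t=[\mathbf{w}_1^t,\dots,\mathbf{w}_K^t]$ and set $\mathbf{J}=\frac{1}{K}\mathbf{1}_K\mathbf{1}_K^\top$, so that $\theta^t\mathbf{1}_K^\top=\mathbf{W}^t\mathbf{J}$ and the iteration reads
\begin{equation*}
\mathbf{W}^{t+1}=\mathbf{W}^t\mathbf{A}^t-\eta\,u^t\,\mathbf{e}_{k^t}^\top,\qquad u^t:=g^t(\hat{\mathbf{w}}_{k^t}^t;\xi_{k^t}^t)+\mathbf{n}^t .
\end{equation*}
Because every $\mathbf{A}^t$ is doubly stochastic, $\mathbf{I}-\mathbf{J}$ commutes with it; unrolling the recursion for the disagreement matrix $\mathbf{W}^t(\mathbf{I}-\mathbf{J})$ and using the Bounded-Staleness part of Assumption~\ref{assum:asyn} to identify $\hat{\mathbf{w}}_k^t-\hat\theta^t$ with $\mathbf{w}_k^{t-\tau_t}-\theta^{t-\tau_t}$, I would write each $\hat{\mathbf{w}}_k^t-\hat\theta^t$ as $-\eta$ times a sum over past steps $s$ of $u^s$ propagated through $\big(\mathbf{e}_{k^s}-\tfrac1K\mathbf{1}_K\big)^\top\mathbf{A}^{s+1}\cdots\mathbf{A}^{t-1-\tau_t}$ (with the stale reads $\hat{\mathbf{w}}$ inside $u^s$ contributing at most a further delay of $\tau$).

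First I would take expectations and invoke the Independence part of Assumption~\ref{assum:asyn} to expand the squared Frobenius norm. The diagonal terms ($s=s'$) equal $\mathbb{E}\|u^s\|^2$ times $\mathbb{E}\big\|(\mathbf{e}_{k^s}-\tfrac1K\mathbf{1}_K)^\top\mathbf{A}^{s+1}\cdots\big\|^2$; the Spectral-Gap part of Assumption~\ref{assum:asyn} together with $\mathbb{E}\|\mathbf{e}_{k^s}-\tfrac1K\mathbf{1}_K\|^2=\frac{K-1}{K}$ bounds this by $\frac{K-1}{K}\rho^{\,j}$, with $j$ the number of averaging matrices. The cross terms, in which one factor is projected onto the disagreement subspace while the other still carries the $\tfrac1K\mathbf{1}_K$ mean shift and the stale read, decay like $(\sqrt\rho)^{j}$. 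Summing the geometric series $\sum_j\rho^{j}=\frac{1}{1-\rho}$ and the arithmetico-geometric series $2\sum_{j\ge1}j(\sqrt\rho)^{j}=\frac{2\sqrt\rho}{(1-\sqrt\rho)^2}$ over $s$ and $t$ reproduces the constant $\bar\rho$, and the extra delay layers from the stale reads account for the additive $\tau\frac{K-1}{K}$. Simultaneously I would bound $\mathbb{E}\|u^s\|^2$: since $\mathbf{n}^s$ is independent, zero-mean with $\mathbb{E}\|\mathbf{n}^s\|^2=d\sigma^2$, the computation of $T_2$ in~(\ref{eq:T2}) gives $\mathbb{E}\|u^s\|^2\le B\varsigma^2+d\sigma^2+B^2\sum_kp_k\mathbb{E}\|\triangledown F_k(\hat{\mathbf{w}}_k^s)\|^2$, and~(\ref{eq:lem5}) replaces the last sum by $12L^2\sum_kp_k^2\mathbb{E}\|\hat\theta^s-\hat{\mathbf{w}}_k^s\|^2+6\upsilon^2+2\mathbb{E}\|\sum_kp_k\triangledown F_k(\hat{\mathbf{w}}_k^s)\|^2$.

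Then I would collect terms. The $12L^2\|\hat\theta^s-\hat{\mathbf{w}}_k^s\|^2$ piece makes $\sum_{t=0}^{T-1}\mathbb{E}\sum_kp_k\|\hat\theta^t-\hat{\mathbf{w}}_k^t\|^2$ reappear on the right with coefficient $24\eta^2B^2L^2\big(\tau\frac{K-1}{K}+\bar\rho\big)$; moving it to the left leaves the factor $C_1=1-24\eta^2B^2L^2\big(\tau\frac{K-1}{K}+\bar\rho\big)>0$, by which we may divide. What remains on the right is exactly $\frac{4\eta^2B^2}{C_1}\big(\tau\frac{K-1}{K}+\bar\rho\big)\sum_{t}\mathbb{E}\|\sum_kp_k\triangledown F_k(\hat{\mathbf{w}}_k^t)\|^2$, coming from the $2\mathbb{E}\|\sum_kp_k\triangledown F_k\|^2$ term, together with the constant $\frac{2\eta^2(\varsigma^2B+6\upsilon^2B^2+d\sigma^2)\bar\rho}{C_1}$ from the $\varsigma^2,\upsilon^2,\sigma^2$ companions; dividing by $T$ gives the stated bound. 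I expect the real obstacle to be precisely this spectral-gap step and its bookkeeping: controlling $\mathbb{E}\big\|(\mathbf{e}_{k^s}-\tfrac1K\mathbf{1}_K)^\top\mathbf{A}^{s+1}\cdots\mathbf{A}^{t-1}\big\|^2$, in particular the cross terms that generate the $\frac{2\sqrt\rho}{(1-\sqrt\rho)^2}$ part of $\bar\rho$, and tracking the staleness offsets so that the coefficients come out exactly as $\tau\frac{K-1}{K}+\bar\rho$ and $\bar\rho$. Relative to the non-private analysis of~\cite{lian2018asynchronous}, the only new ingredient is the harmless additive $d\sigma^2$ from the independent Gaussian perturbation, so the substance is the (delicate) geometric-series accounting already required there.
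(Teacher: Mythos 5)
Your proposal follows essentially the same route as the paper's proof: bound the second moment of the noisy stochastic gradient (picking up the additive $d\sigma^2$), unroll the disagreement recursion through the doubly stochastic averaging matrices, use the spectral gap and the geometric/arithmetico-geometric series to produce $\bar{\rho}$ and the staleness term $\tau\frac{K-1}{K}$ (the paper imports exactly this bookkeeping as Lemmas~6 and~7 of Lian \emph{et al.}), substitute the Lemma-5-type bound for $\sum_k p_k\mathbb{E}\|\triangledown F_k(\hat{\mathbf{w}}_k^t)\|^2$, and move the resulting self-referential term to the left to isolate $C_1$. The coefficients you identify match the paper's, and your observation that the only new ingredient relative to the non-private analysis is the independent $d\sigma^2$ contribution is precisely how the paper proceeds.
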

\begin{proof}
	First, we have
	\begin{align}
		&\mathbb{E}\left\|g(\hat{\mathbf{w}}_{k^t}^t,\xi_{k^t}^t)+\mathbf{n}-B\triangledown F_{k^t}(\mathbf{w}_{k^t}^t)\right\|^2\notag\\
		= &\mathbb{E}\left\|g(\hat{\mathbf{w}}_{k^t}^t,\xi_{k^t}^t)-B\triangledown F_{k^t}(\mathbf{w}_{k^t}^t)\right\|^2+\mathbb{E}\|\mathbf{n}\|^2
		\leq B\varsigma^2+d\sigma^2.
	\end{align}
	According to our updating rule and Lemma 6 in~\cite{lian2018asynchronous}, we can obtain
	\begin{align}
		&\mathbb{E}\left\|\theta^{t+1}- \mathbf{w}_k^{t+1}\right\|^2\notag\\
		=&\mathbb{E}\left\|\theta^{t}-\frac{\eta}{K} (g(\hat{\mathbf{w}}_{k^t}^t,\xi_{k^t}^t)+\mathbf{n})-\left(\mathbf{W}^t\mathbf{A}^t\mathbf{e}_k-\eta\tilde{g}(\hat{\mathbf{W}}^t,\xi_{k^t}^t)\mathbf{e}_k \right)\right\|^2\notag\\
		\leq & 2\eta^2B^2 \frac{K-1}{K}\mathbb{E}\sum_{j=0}^t\left(12L^2\sum_{k=1}^Kp_k\left\|\hat{\theta}^j -\hat{\mathbf{w}}_k^j\right\|^2+2\mathbb{E}\left\|\sum_{k=1}^Kp_k\triangledown F_k(\hat{\mathbf{w}}_k^j)\right\|^2 \right)\notag\\
		&\times(\rho^{t-j}+2(t-j)\rho^{\frac{t-j}{2}})
		+2\eta^2(\varsigma^2B+6\upsilon^2B^2+d\sigma^2)\bar{\rho}.
	\end{align}
	where $\bar{\rho}=\frac{K-1}{K}\left(\frac{1}{1-\rho}+\frac{2\sqrt{\rho}}{(1-\sqrt{\rho})^2}\right)$.
	
	Noting that $\hat{\mathbf{W}}^t=\mathbf{W}^{t-\tau_t}$, then we have
	\begin{align}
		&\mathbb{E}\sum_{k=1}^Kp_k\left\|\hat{\theta}^{t}  - \hat{\mathbf{w}}_k^{t}\right\|^2=\sum_{k=1}^Kp_k\mathbb{E}\left\|\hat{\theta}^{t}  - \hat{\mathbf{w}}_k^{t}\right\|^2\notag\\
		\leq & 2\eta^2B^2 \frac{K-1}{K}\mathbb{E}\sum_{j=0}^{t-\tau_t-1}\left(12L^2\sum_{k=1}^Kp_k\left\|\hat{\theta}^j   -\hat{\mathbf{w}}_k^j\right\|^2+2\mathbb{E}\left\|\sum_{k=1}^Kp_k\triangledown F_k(\hat{\mathbf{w}}_k^j)\right\|^2 \right)\notag\\
		&\times(\rho^{t-\tau_t-1-j}+2(t-\tau_t-1-j)\rho^{\frac{t-\tau_t-1-j}{2}})
		+2\eta^2(\varsigma^2B+6\upsilon^2B^2+d\sigma^2)\bar{\rho}.\notag
	\end{align}
	According to Lemma 7 in~\cite{lian2018asynchronous}, we can obtain
	\begin{align}
		&\frac{\sum_{t=0}^{T-1}\sum_{k=1}^Kp_k\mathbb{E}\left\|\hat{\theta}^{t}  - \hat{\mathbf{w}}_k^{t}\right\|^2}{T}\notag\\
		\leq & \frac{2\eta^2B^2}{T} \frac{K-1}{K}\sum_{t=0}^{T-1}\sum_{j=0}^{t-\tau_t-1}\left(12L^2\sum_{k=1}^Kp_k\left\|\hat{\theta}^j   -\hat{\mathbf{w}}_k^j\right\|^2+2\mathbb{E}\left\|\sum_{k=1}^Kp_k\triangledown F_k(\hat{\mathbf{w}}_k^j)\right\|^2 \right)\notag\\
		&\times(\rho^{t-\tau_t-1-j}+2(t-\tau_t-1-j)\rho^{\frac{t-\tau_t-1-j}{2}})
		+2\eta^2(\varsigma^2B+6\upsilon^2B^2+d\sigma^2)\bar{\rho}\notag\\
		\leq&\frac{4\eta^2B^2}{T} \left(\tau\frac{K-1}{K}+\bar{\rho}\right)\sum_{t=0}^{T-1}\left(\mathbb{E}\left\|\sum_{k=1}^Kp_k\triangledown F_k(\hat{\mathbf{w}}_k^t)\right\|^2 \right)\notag\\
		&+\frac{24\eta^2B^2L^2}{T}\left(\tau\frac{K-1}{K}+\bar{\rho}\right)\sum_{t=0}^{T-1}\mathbb{E}\sum_{k=1}^Kp_k\left\|\hat{\theta}^t -\hat{\mathbf{w}}_k^t\right\|^2+2\eta^2(\varsigma^2B+6\upsilon^2B^2+d\sigma^2)\bar{\rho}.
	\end{align}
	
	By rearranging the terms we obtain
	\begin{align}
		&\underbrace{\left(1-24\eta^2B^2L^2\left(\tau\frac{K-1}{K}+\bar{\rho}\right)\right)}_{C_1}\frac{\sum_{t=0}^{T-1}\mathbb{E}\sum_{k=1}^Kp_k\left\|\hat{\theta}^t   -\hat{\mathbf{w}}_k^t\right\|^2}{T}\notag\\
		\leq&\frac{4\eta^2B^2}{T}\left(\tau\frac{K-1}{K}+\bar{\rho}\right)\sum_{t=0}^{T-1}\mathbb{E}\left\|\sum_{k=1}^Kp_k\triangledown F_k(\hat{\mathbf{w}}_k^{t})\right\|^2+2\eta^2(\varsigma^2B+6\upsilon^2B^2+d\sigma^2)\bar{\rho}.
	\end{align}
	We complete the proof.
\end{proof}

\section{Proofs of Theorem~\ref{the:privacy}}
\begin{deff}[$(\alpha,\epsilon)$-RDP~\cite{mironov2017renyi}]
	A randomized mechanism $\mathcal{M}:\mathcal{X}^n\to\mathcal{R}$ satisfies $\epsilon$-R\'{e}nyi differential privacy of order $\alpha\in(1,\infty)$, or $(\alpha,\epsilon)$-RDP for short,
	if for all $\mathbf{x}, \mathbf{x}'\in\mathcal{X}^n$ differing on a single entry, it holds that
	\begin{equation}
	D_{\alpha}(\mathcal{M}(\mathbf{x})\|\mathcal{M}(\mathbf{x}'))\triangleq
	\frac{1}{\alpha-1}\log\mathbb{E}_{x\sim\mathcal{M}(\mathbf{x}')}\left(\frac{\mathcal{M}(\mathbf{x})(x)}{\mathcal{M}(\mathbf{x}')(x)}\right)^{\alpha}\leq\epsilon.
	\end{equation}
\end{deff}

The new definition $(\alpha,\epsilon)$-RDP shares many important properties with the standard definition of differential privacy, while additionally allowing for a more rigorous analysis of composite heterogeneous mechanisms~\cite{mironov2017renyi}.

To achieve differential privacy, a stochastic component (typically by additional noise) is usually added to or removed from the locally trained model. Typically, Gaussian mechanism, one of the common choices, injects additive Gaussian noise to the query:
\begin{equation}
\mathcal{M} \triangleq f(\mathbf{x})+\mathcal{N}(0,\sigma^2\triangle_2^2(f)),
\end{equation}
where $\mathcal{N}(0,\sigma^2\triangle_2^2(f))$ is the Gaussian distribution noise with mean $0$ and standard deviation $\sigma\triangle_2(f)$ which depends on the privacy budget $\epsilon$ as well as the sensitivity of $f$. And the (global) sensitivity of a function $f$ is defined as:
\begin{deff}[$l_p$-Sensitivity]
	The $l_p$-sensitivity of a function $f$ is defined by
	\begin{equation}
	\triangle_p(f) = \max_{\mathbf{x},\mathbf{x}'}\|f(\mathbf{x})-f(\mathbf{x}')\|_p,
	\end{equation}
	where $\mathbf{x}$ and $\mathbf{x}'$ differ in only one entry.
\end{deff}

In order to prove Theorem~\ref{the:privacy}, we need the following Lemmas.

\begin{lemma}[Gaussian Mechanism ~\cite{mironov2017renyi,wang2018subsampled,wang2019efficient}]\label{lem:gau}
	Given a function $f: \mathcal{X}^n\rightarrow \mathcal{R}$, the Gaussian Mechanism $\mathcal{M} \triangleq f(\mathbf{x})+\mathcal{N}(0,\sigma^2\mathbf{I})$ satisfies $(\alpha, \alpha\triangle_2^2(f)/(2\sigma^2))$-RDP. In addition, if $\mathcal{M}$ is applied to a subset of samples using uniform sampling without replacement $\mathcal{S}_{\gamma}^{\textsf{wo}}$, then $\mathcal{M}^{\mathcal{S}_{\gamma}^{\textsf{wo}}}$ that applies $\mathcal{M}\circ\mathcal{S}_{\gamma}^{\textsf{wo}}$ obeys $(\alpha, 5\gamma^2\alpha\triangle_2^2(f)/\sigma^2)$-RDP when $\sigma^2/\triangle_2^2(f)\geq1.5$ and  $\alpha\leq\log(1/(\gamma(1+\sigma^2/\triangle_2^2(f))))$ with $\gamma$ denoting the subsample rate.
\end{lemma}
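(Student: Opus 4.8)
The plan is to establish the two assertions of Lemma~\ref{lem:gau} in turn: the unamplified R\'enyi bound for the Gaussian mechanism, and then the subsampling refinement.

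For the first claim I would use the exact R\'enyi divergence between two Gaussians sharing a covariance, $D_\alpha\!\left(\mathcal{N}(\mu_1,\sigma^2\mathbf{I})\,\|\,\mathcal{N}(\mu_2,\sigma^2\mathbf{I})\right)=\alpha\|\mu_1-\mu_2\|_2^2/(2\sigma^2)$, which follows by writing the density ratio explicitly, substituting it into $\frac{1}{\alpha-1}\log\mathbb{E}_{x\sim\mathcal{N}(\mu_2,\sigma^2\mathbf{I})}(p(x)/q(x))^\alpha$, and completing the square in the resulting Gaussian integral. Applying this with $\mu_1=f(\mathbf{x})$, $\mu_2=f(\mathbf{x}')$ for adjacent $\mathbf{x},\mathbf{x}'$ and bounding $\|f(\mathbf{x})-f(\mathbf{x}')\|_2\le\triangle_2(f)$ by the definition of $\ell_2$-sensitivity gives $(\alpha,\alpha\triangle_2^2(f)/(2\sigma^2))$-RDP; monotonicity of the divergence in the mean gap shows this is the worst case, so nothing more is needed.

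The subsampling part is the substantive step. Fix adjacent inputs differing only in one record $z$. Conditioning on the random subset drawn by $\mathcal{S}_\gamma^{\textsf{wo}}$, the output of $\mathcal{M}\circ\mathcal{S}_\gamma^{\textsf{wo}}$ is a two-component mixture: with probability $1-\gamma$ the record $z$ is not sampled and the two output laws coincide, say as $q$; with probability $\gamma$ it is sampled, producing a law that differs from $q$ by a single Gaussian step of sensitivity $\triangle_2(f)$. Hence, after the standard reduction to add/remove-adjacency, one side has output law $q$ and the other $(1-\gamma)q+\gamma p$, and $D_\alpha$ reduces to controlling $\mathbb{E}_{q}\!\left[(1+\gamma(p/q-1))^\alpha\right]$. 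Following the moment argument of Wang et al., I would expand this in powers of $\gamma(p/q-1)$: the constant term is $1$; the linear term vanishes because $\mathbb{E}_q[p/q-1]=0$; the quadratic term equals $\binom{\alpha}{2}\gamma^2\,\mathbb{E}_q[(p/q-1)^2]=\binom{\alpha}{2}\gamma^2\bigl(e^{\triangle_2^2(f)/\sigma^2}-1\bigr)$ (a $\chi^2$-type divergence of two Gaussians); and the higher-order terms form a geometric-type tail. Taking $\frac{1}{\alpha-1}\log$, the quadratic term already yields order $\gamma^2\alpha\triangle_2^2(f)/\sigma^2$, and one absorbs the remainder into the stated constant $5$.

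The main obstacle is exactly this uniform control of the higher-order tail: the summands involve $\mathbb{E}_q[(p/q-1)^j]$ for $j\ge3$, which are moments of a log-normal-type ratio and blow up unless the per-step Gaussian ratios are tame and $\alpha$ is not too large. The two side conditions of the lemma are precisely what tames them --- $\sigma^2/\triangle_2^2(f)\ge1.5$ keeps the per-step density ratios sub-exponential enough that all needed moments are finite and comparable to the quadratic one, while $\alpha\le\log\!\left(1/(\gamma(1+\sigma^2/\triangle_2^2(f)))\right)$ forces the series over $j$ to converge with the $j=2$ term dominating. Pinning the absolute constant down to $5$ rather than something larger is delicate bookkeeping that I would quote from~\cite{wang2019efficient}; if only an order bound were needed, a cleaner route is to invoke a generic subsampled-RDP amplification inequality of the form $e^{(\alpha-1)\epsilon_{\mathrm{sub}}}\le 1+\binom{\alpha}{2}\gamma^2\min\{4(e^{\epsilon(2)}-1),2e^{\epsilon(2)}\}+\sum_{j\ge3}\binom{\alpha}{j}\gamma^j\,2e^{(j-1)\epsilon(j)}$ and substitute the Gaussian R\'enyi curve $\epsilon(\cdot)$ into it.
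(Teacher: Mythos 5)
The paper does not prove this lemma at all --- it is imported verbatim from the cited references (Mironov for the unamplified Gaussian RDP bound, Wang et al.\ for the without-replacement subsampling amplification), so there is no in-paper proof to compare against. Your sketch is a faithful reconstruction of exactly that argument: the closed-form R\'enyi divergence between equal-covariance Gaussians for the first claim, and the mixture decomposition plus moment expansion of $\mathbb{E}_q[(1+\gamma(p/q-1))^\alpha]$ for the second, with the two side conditions correctly identified as what controls the higher-order ($j\ge 3$) terms; the only part you defer --- pinning the absolute constant to $5$ --- is bookkeeping that legitimately lives in \cite{wang2019efficient}, and deferring it is consistent with how the paper itself treats the lemma.
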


\begin{lemma}[Composition ~\cite{mironov2017renyi,wang2019efficient}]\label{lem:seq_com}
	Let $\mathcal{M}_i: \mathcal{X}^n\rightarrow \mathcal{R}_i$ be an $(\alpha,\epsilon_i)$-RDP mechanism for $i\in[k]$. If $\mathcal{M}_{[k]}: \mathcal{X}^n\rightarrow \prod_{i=1}^k \mathcal{R}_i$ is defined to be $\mathcal{M}_{[k]}(\mathbf{x})=(\mathcal{M}_1(\mathbf{x}),...,\mathcal{M}_k(\mathbf{x}))$, then $\mathcal{M}_{[k]}$ is $(\alpha,\sum_{i=1}^k\epsilon_i)$-RDP. In addition, the input of $\mathcal{M}_i$ can be based on the outputs of previous $(i-1)$ mechanisms.
\end{lemma}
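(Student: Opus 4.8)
The plan is to prove the composition bound by induction on $k$, reducing at each step to the single-mechanism RDP guarantee via a chain-rule factorization of the joint output density. Throughout I work with the exponentiated R\'{e}nyi divergence
\[
E_\alpha(P\|Q) := \int P(y)^\alpha Q(y)^{1-\alpha}\,dy = e^{(\alpha-1)D_\alpha(P\|Q)},
\]
so that an $(\alpha,\epsilon)$-RDP guarantee is exactly the multiplicative bound $E_\alpha \le e^{(\alpha-1)\epsilon}$; because $\alpha>1$, the map $x \mapsto \frac{1}{\alpha-1}\log x$ is increasing, so it suffices to control $E_\alpha$. Fix neighboring inputs $\mathbf{x},\mathbf{x}'\in\mathcal{X}^n$ and let $P,Q$ denote the joint laws of $\mathcal{M}_{[k]}(\mathbf{x})$ and $\mathcal{M}_{[k]}(\mathbf{x}')$ on $\prod_{i=1}^k\mathcal{R}_i$.

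The base case $k=1$ is immediate from the hypothesis on $\mathcal{M}_1$. For the inductive step I would write $y=(y_{<k},y_k)$ and factor the adaptive densities as $P(y)=P_{<k}(y_{<k})\,P_k(y_k\mid y_{<k})$ and $Q(y)=Q_{<k}(y_{<k})\,Q_k(y_k\mid y_{<k})$, where $P_k(\cdot\mid y_{<k})$ is the law of $\mathcal{M}_k(\mathbf{x};y_{<k})$ (its input being permitted to depend on the earlier outputs). Since all densities are nonnegative, Tonelli's theorem lets me integrate $y_k$ first:
\[
E_\alpha(P\|Q)=\int P_{<k}(y_{<k})^\alpha Q_{<k}(y_{<k})^{1-\alpha}\left(\int P_k(y_k\mid y_{<k})^\alpha Q_k(y_k\mid y_{<k})^{1-\alpha}\,dy_k\right) dy_{<k}.
\]
The crucial observation is that the hypothesis ``$\mathcal{M}_k$ is $(\alpha,\epsilon_k)$-RDP'' holds \emph{uniformly} over the auxiliary input $y_{<k}$, so the inner integral equals $E_\alpha\big(\mathcal{M}_k(\mathbf{x};y_{<k})\,\|\,\mathcal{M}_k(\mathbf{x}';y_{<k})\big)\le e^{(\alpha-1)\epsilon_k}$ for every fixed $y_{<k}$. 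Pulling this constant factor out leaves exactly $E_\alpha(\mathcal{M}_{[k-1]}(\mathbf{x})\|\mathcal{M}_{[k-1]}(\mathbf{x}'))$, which the induction hypothesis bounds by $e^{(\alpha-1)\sum_{i<k}\epsilon_i}$. Multiplying the two factors gives $E_\alpha(P\|Q)\le e^{(\alpha-1)\sum_{i\le k}\epsilon_i}$; applying $\frac{1}{\alpha-1}\log(\cdot)$ and taking the supremum over neighboring pairs yields the claimed $(\alpha,\sum_i\epsilon_i)$-RDP.

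I expect the main obstacle to be the careful handling of adaptivity rather than any hard analytic estimate. Specifically, I must justify that the per-step guarantee is a \emph{worst-case-over-auxiliary-input} statement: the conditional mechanism $\mathcal{M}_k(\cdot;y_{<k})$ is $(\alpha,\epsilon_k)$-RDP for \emph{every} realization of $y_{<k}$, not merely on average, which is precisely what permits factoring $e^{(\alpha-1)\epsilon_k}$ outside the outer integral \emph{before} invoking the inductive hypothesis on the first $k-1$ coordinates. A secondary technical point is legitimizing the factorization and the interchange of integration: this needs mutual absolute continuity of the relevant laws (otherwise the integrand is read through the convention that keeps $D_\alpha$ well defined) together with nonnegativity of the densities, so that Tonelli applies without integrability preconditions. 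Once these measure-theoretic points are settled, the remaining algebra is routine.
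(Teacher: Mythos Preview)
Your argument is correct and is essentially the standard proof of RDP composition from \cite{mironov2017renyi}: factor the joint density via the chain rule, integrate the last coordinate first using Tonelli, bound the inner integral uniformly in the auxiliary input by the per-step RDP guarantee, and induct. Note, however, that the paper does not give its own proof of this lemma; it is stated with citations to \cite{mironov2017renyi,wang2019efficient} and used as a black box in the proof of Theorem~\ref{the:privacy}, so there is no in-paper argument to compare against beyond observing that your proof matches the one in the cited references.
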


\begin{lemma}[From RDP to $(\epsilon,\delta)$-DP ~\cite{mironov2017renyi}]\label{lem:toDP}
	If a randomized mechanism $\mathcal{M}: \mathcal{X}^n\rightarrow \mathcal{R}$ is $(\alpha,\epsilon)$-RDP, then $\mathcal{M}$ is $(\epsilon+\log(1/\delta)/(\alpha-1), \delta)$-DP, $\forall \delta\in(0,1)$.
\end{lemma}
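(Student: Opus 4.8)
The plan is to work directly with the probability densities (or Radon--Nikodym derivatives) of the two output distributions on a pair of neighboring inputs and to convert the moment bound encoded by the R\'enyi divergence into the additive guarantee of $(\epsilon,\delta)$-DP. Fix neighboring $\mathbf{x},\mathbf{x}'\in\mathcal{X}^n$ and write $P=\mathcal{M}(\mathbf{x})$, $Q=\mathcal{M}(\mathbf{x}')$ for their laws on $\mathcal{R}$; I will assume $P\ll Q$ (failure of absolute continuity is the degenerate case handled separately below) and write $P/Q$ for the density ratio. Unwinding the definition of $D_\alpha$, the hypothesis $D_\alpha(P\|Q)\le\epsilon$ is exactly the moment bound $\mathbb{E}_{y\sim Q}\big[(P(y)/Q(y))^\alpha\big]\le e^{(\alpha-1)\epsilon}$. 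The target is to show, for every measurable $S\subseteq\mathcal{R}$, that $\Pr_P[S]\le e^{\epsilon'}\Pr_Q[S]+\delta$ with $\epsilon'=\epsilon+\log(1/\delta)/(\alpha-1)$.

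First I would rewrite $\Pr_P[S]$ as an expectation against $Q$, namely $\Pr_P[S]=\mathbb{E}_{y\sim Q}\big[(P(y)/Q(y))\,\mathbf{1}[y\in S]\big]$, and then split the integration region according to the \emph{privacy loss} $L(y):=\log(P(y)/Q(y))$ at the threshold $\epsilon'$. On the event $\{L(y)\le\epsilon'\}$ the ratio $P(y)/Q(y)$ is at most $e^{\epsilon'}$, so that portion of the integral is bounded by $e^{\epsilon'}\mathbb{E}_{y\sim Q}[\mathbf{1}[y\in S]]=e^{\epsilon'}\Pr_Q[S]$. The remaining portion, over $\{L(y)>\epsilon'\}$, satisfies $\mathbb{E}_{y\sim Q}\big[(P(y)/Q(y))\,\mathbf{1}[L(y)>\epsilon']\big]=\Pr_{y\sim P}[L(y)>\epsilon']$ by the change of measure from $Q$ to $P$. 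Thus everything reduces to showing that this tail probability of the privacy loss under $P$ is at most $\delta$.

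The tail bound is the quantitative heart of the argument, and I would obtain it by an exponentiated Markov inequality fed by the RDP moment bound. Using the same change of measure, the hypothesis becomes $\mathbb{E}_{y\sim P}\big[(P(y)/Q(y))^{\alpha-1}\big]=\mathbb{E}_{y\sim Q}\big[(P(y)/Q(y))^{\alpha}\big]\le e^{(\alpha-1)\epsilon}$, that is $\mathbb{E}_{y\sim P}\big[e^{(\alpha-1)L(y)}\big]\le e^{(\alpha-1)\epsilon}$ since $\alpha>1$. Applying Markov's inequality to the nonnegative variable $e^{(\alpha-1)L}$ at level $e^{(\alpha-1)\epsilon'}$ gives $\Pr_{y\sim P}[L(y)>\epsilon']\le e^{(\alpha-1)(\epsilon-\epsilon')}$. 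Substituting $\epsilon'=\epsilon+\log(1/\delta)/(\alpha-1)$ makes the exponent equal to $\log\delta$, so the tail is at most $\delta$; combining with the first portion yields $\Pr_P[S]\le e^{\epsilon'}\Pr_Q[S]+\delta$, which is the claim.

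The routine steps are the two expectation manipulations; the step I expect to be delicate is the measure-theoretic bookkeeping rather than any inequality. In particular I must justify the change of measure $\mathbb{E}_Q[(P/Q)^\alpha]=\mathbb{E}_P[(P/Q)^{\alpha-1}]$ only on the support of $Q$, and separately account for any mass of $P$ on $\{Q=0\}$; the clean way is to observe that if $P\not\ll Q$ then $D_\alpha(P\|Q)=+\infty$, so the finite hypothesis $D_\alpha(P\|Q)\le\epsilon$ already forces $P\ll Q$ and that set carries zero $P$-mass. Since the argument uses no property of $S$, the bound holds uniformly over all measurable $S$, and by symmetry of the neighboring relation the same estimate applies with $\mathbf{x}$ and $\mathbf{x}'$ exchanged, which establishes full $(\epsilon',\delta)$-DP.
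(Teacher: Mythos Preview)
Your argument is correct and is essentially the standard proof of this conversion (Proposition~3 in Mironov~\cite{mironov2017renyi}): the paper does not supply its own proof of Lemma~\ref{lem:toDP} but merely cites it, so there is nothing to compare against beyond noting that your density-ratio splitting plus the exponentiated Markov bound on the privacy-loss tail is exactly the route taken in the original reference.
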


\begin{proof}
	Let's consider the Gaussian mechanism at the $t$-th iteration, as
	\begin{equation}
	\mathcal{M}_t = g^t(\mathbf{w}_{k^t}^t;\xi_{k^t}^t) + \mathcal{N}(0, \sigma^2\mathbf{I}).
	\end{equation}
	Since all functions are $G$-Lipschitz, we have the $l_2$-sensitivity bound $\triangle_2 = \|g(\mathbf{w}_{k^t}^t;\xi_{k^t}^t) - g(\mathbf{w}_{k^t}^{t};\xi_{k^t}^{'t})\|_2/B\leq 2G/B$. Thus, according to Lemma~\ref{lem:gau}, $\mathcal{M}_t$ is $(\alpha, 4G^2\alpha /(2B^2\sigma^2))$-RDP. 
	
	For the randomly sampling procedure $\mathcal{S}_{\gamma}^{\textsf{wo}}$ which is performed at the beginning of each iteration, we have the subsample rate $\gamma\leq\frac{B}{Kn_{(1)}}$, where $n_{(1)}$ is the size of the smallest dataset among $K$ workers. Then, the mechanism provides at least $(\alpha, 20G^2\alpha/(K^2n_{(1)}^2\sigma^2))$-RDP when $\sigma^2/\triangle_2^2\geq 1.5$. 
	After $T$ iterations, by sequential composition from Lemma~\ref{lem:seq_com}, we observe that the output of \adpdpsgd is $(\alpha, 20G^2T\alpha/(K^2n_{(1)}^2\sigma^2))$-RDP.
	
	Then, using the connection between RDP to $(\epsilon,\delta)$-DP from Lemma~\ref{lem:toDP}, we obtain
	\begin{equation}
	20G^2T\alpha/(K^2n_{(1)}^2\sigma^2)+\log(1/\delta)/(\alpha-1)=\epsilon.
	\end{equation}
	Let $\alpha=\log(1/\delta)/((1-\mu)\epsilon)+1$, we have
	\begin{equation}\label{eq:sigma}
	\sigma^2=\frac{20G^2T\alpha}{K^2n_{(1)}^2\epsilon\mu}\geq\frac{6G^2}{B^2},
	\end{equation}
	which implies that
	$\epsilon\leq \frac{10B^2T\alpha}{3K^2n_{(1)}^2\mu}$.
	
	In addition, via Lemma~\ref{lem:gau}, we need
	$\alpha \leq \log(1/(\gamma(1+\sigma^2/\triangle_2^2)))$,
	which implies that $\alpha\leq\log\left(K^3n_{(1)}^3\epsilon\mu/(K^2n_{(1)}^2\epsilon\mu B+5T\alpha B^3)\right)$.
	
	Thus, the output of \adpdpsgd is $(\epsilon,\delta)$-differentially private for the above value of $\sigma^2$.
\end{proof}

\end{document}